\documentclass[10pt,journal]{IEEEtran}
\usepackage{array}
\usepackage[caption=false,font=footnotesize,labelfont=sf,textfont=sf,farskip=2pt,captionskip=2pt]{subfig}
\usepackage{textcomp}
\usepackage{stfloats}
\usepackage{verbatim}
\usepackage{default}
\usepackage{todonotes}
\usepackage{eso-pic}

\def\BibTeX{{\rm B\kern-.05em{\sc i\kern-.025em b}\kern-.08em
    T\kern-.1667em\lower.7ex\hbox{E}\kern-.125emX}}
\usepackage{balance}
\begin{document}
\title{Nonconvex Latent Optimally Partitioned Block-Sparse Recovery via Log-Sum and Minimax Concave Penalties}
\author{
    Takanobu Furuhashi, Hiroki Kuroda, Masahiro Yukawa, Qibin Zhao, Hidekata Hontani, and Tatsuya Yokota
    \thanks{This work was partially supported by the Japan Society for the Promotion of Science (JSPS) KAKENHI under Grant 23K28109.}
    \thanks{Takanobu Furuhashi, Hidekata Hontani and Tatsuya Yokota are with Nagoya Institute of Technology, Nagoya, Japan (e-mail: clz14116@nitech.jp, \{hontani.hidekata / t.yokota\}@nitech.ac.jp). Qibin Zhao and Tatsuya Yokota are also with RIKEN Center for Advanced Intelligence Project, Tokyo, Japan (e-mail: qibin.zhao@riken.jp). Hiroki Kuroda is with Nagaoka University of Technology, Niigata, Japan (e-mail: kuroda@vos.nagaokaut.ac.jp). Masahiro Yukawa is with Keio University, Yokohama, Japan (e-mail: yukawa@elec.keio.ac.jp).}
}

\markboth{Journal of \LaTeX\ Class Files,~Vol.~18, No.~9, September~2020}%
{How to Use the IEEEtran \LaTeX \ Templates}

\maketitle
\AddToShipoutPictureBG*{%
    \AtPageLowerLeft{%
        \makebox[\paperwidth]{%
            \parbox[b][1.5cm][c]{0.7\paperwidth}{%
                \centering
                \footnotesize
                This work has been submitted to the IEEE for possible publication. \\
                Copyright may be transferred without notice, after which this version may no longer be accessible.
            }%
        }%
    }%
}
\begin{abstract}
    We propose two nonconvex regularization methods, LogLOP-\ltwolone and AdaLOP-\ltwolone, for recovering block-sparse signals with unknown block partitions.
    These methods address the underestimation bias of existing convex approaches by extending log-sum penalty and the Minimax Concave Penalty (MCP) to the block-sparse domain via novel variational formulations.
    Unlike Generalized Moreau Enhancement (GME) and Bayesian methods dependent on the squared-error data fidelity term, our proposed methods are compatible with a broad range of data fidelity terms.
    We develop efficient Alternating Direction Method of Multipliers (ADMM)-based algorithms for these formulations that exhibit stable empirical convergence.
    Numerical experiments on synthetic data, angular power spectrum estimation, and denoising of nanopore currents demonstrate that our methods outperform state-of-the-art baselines in estimation accuracy.
\end{abstract}

\begin{IEEEkeywords}
    Regularization, block-sparsity, unknown partitions, nonconvex penalty, alternating optimization
\end{IEEEkeywords}

\section{Introduction}
\IEEEPARstart{S}{parse} regularization is a fundamental technique in signal processing and machine learning, enabling the recovery of optimal solutions by exploiting the inherent sparsity of natural signals~\cite{10.2307/2346178,chenAtomicDecompositionBasis1998,zhangSurveySparseRepresentation2015,wenSurveyNonconvexRegularizationBased2018,johnwrightHighDimensionalDataAnalysis2022}.
We consider the problem of estimating a sparse signal $\bm x \in \mathbb{R}^N$ from observations $\bm y \in \mathbb{R}^J$ governed by the linear model $\bm y = \bm A \bm x + \bm \epsilon$, where $\bm A \in \mathbb{R}^{J \times N}$ is an observation matrix and $\bm \epsilon \in \mathbb{R}^J$ is an observation noise.

Sparse estimation is typically formulated as minimizing an objective function regularized by a sparsity-inducing penalty:
\begin{equation}
    \mathop{\mathrm{minimize}}_{\bm{x}\in\mathbb R^N} f(\bm{Ax}) + \lambda \Psi(\bm{Lx}),
    \label{eq:regularized_objective}
\end{equation}
where $f(\bm{Ax})$ is the data fidelity term (e.g., squared error $\|\bm y - \bm A \bm x\|_2^2/2$ for Gaussian noise, absolute error for Laplace noise, or I-divergence for Poisson noise).
The parameter $\lambda > 0$ balances fidelity and sparsity, while $\bm L \in \mathbb R^{K\times N}$ is a sparsifying transform, such as a Fourier or differential operator~\cite{heTimereassignedSynchrosqueezingTransform2019,sandryhailaDiscreteSignalProcessing2013,rodriguezTotalVariationRegularization2013,wangGuaranteedTensorRecovery2023,xueMultilayerSparsityBasedTensor2022}.
The most direct measure of sparsity is the $\ell_0$ pseudo-norm, $\norm{\bm x}_0$, which counts the number of nonzero elements. Nevertheless, the $\ell_0$ pseudo-norm is discontinuous and nonconvex, and its minimization is NP-hard~\cite{10.2307/2346178}.
Consequently, the $\ell_1$ norm, $\norm{\bm x}_1 = \sum_{n=1}^N \abs{x_n}$, is the standard convex surrogate for the $\ell_0$ pseudo-norm due to its computational tractability.
However, its convexity induces an inherent \emph{underestimation bias}, particularly for large-amplitude components, which can prevent the recovery of the true support~\cite{zhangNearlyUnbiasedVariable2010,lohSupportRecoveryIncoherence2017}.

\begin{figure}[tb]
    \centering
    \includegraphics[width=\linewidth]{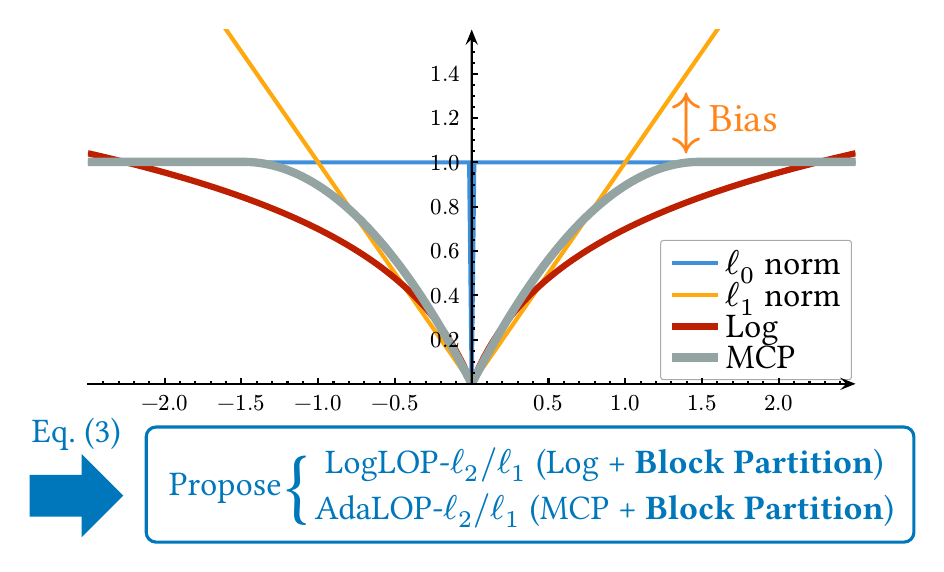}
    \caption{Comparison of sparsity-inducing penalties. Our proposed methods extend the concept of nonconvex log-sum and MC penalties by integrating a block partitioning mechanism to handle unknown block sparsity.}
    \zlabel{fig:compare_sparse_penalty}
\end{figure}

\emph{Nonconvex penalties} bridge the gap between the $\ell_0$ and $\ell_1$ penalties (\zcref{fig:compare_sparse_penalty}). By partially sacrificing convexity, these penalties facilitate nearly unbiased recovery while remaining computationally tractable.
The log-sum penalty~\cite{candesEnhancingSparsityReweighted2008} and Minimax Concave Penalty (MCP)~\cite{zhangNearlyUnbiasedVariable2010,selesnickSparseRegularizationConvex2017} are well-known examples that suppress the excessive penalty for large coefficients.
MCP can be interpreted as an optimally weighted $\ell_1$ penalty called Equivalent MCP (EMCP)~\cite{pokalaIterativelyReweightedMinimaxConcave2022,zhangTensorRecoveryBased2023}, using a weighting scheme whose efficacy has been well-established for assigning smaller weights to larger coefficients \cite{zhuIterativelyWeightedThresholding2021,zhaoAdaptiveWeightingFunction2024,sasakiSparseRegularizationBased2024}.

Although element-wise sparse regularizers are effective, natural signals often exhibit \emph{block sparsity}, where nonzero coefficients cluster into groups~\cite{yuanModelSelectionEstimation2006,jacobGroupLassoOverlap2009,eldarBlocksparseSignalsUncertainty2010,huangSelectiveReviewGroup2012,huGroupSparseOptimization2017,liStructuredSparseCoding2022,zhanLpLqMinimization2025}.
In practice, however, such partitions are seldom available a priori, rendering standard block-sparse regularizers ineffective.

To address this issue, Latent Optimally Partitioned (LOP)-based methods simultaneously estimate the signal and its latent block partitions~\cite{obozinskiGroupLassoOverlaps2011,fangPatterncoupledSparseBayesian2015,fangTwoDimensionalPatternCoupledSparse2016,8319524,heIterativelyReweightedMethod2016,fengLatentFusedLasso2020,santBlockSparseSignalRecovery2022,zhangBlockSparseBayesian2024a}.
Notably, the convex LOP-\ltwolone penalty~\cite{9729560,kurodaGraphStructuredSparseRegularization2022,furuhashiAdaptiveBlockSparse2024} enables efficient proximal optimization but inherits the underestimation bias.
Its nonconvex extension, Generalized Moreau Enhancement (GME)-LOP-\ltwolone penalty~\cite{kurodaConvexnonconvexFrameworkEnhancing2025}, mitigates this bias but is theoretically limited to least-squares problems (i.e., Gaussian noise assumptions).

Similarly, Sparse Bayesian Learning (SBL)~\cite{fangPatterncoupledSparseBayesian2015,santBlockSparseSignalRecovery2022} effectively recovers unknown block partitions but primarily relies on Gaussian noise for analytical tractability (e.g., conjugacy).
Moreover, SBL typically requires explicit matrix inversion, which is prohibitive in high-dimensional settings.

\begin{figure}[tb]
    \centering
    \includegraphics[width=0.9\linewidth]{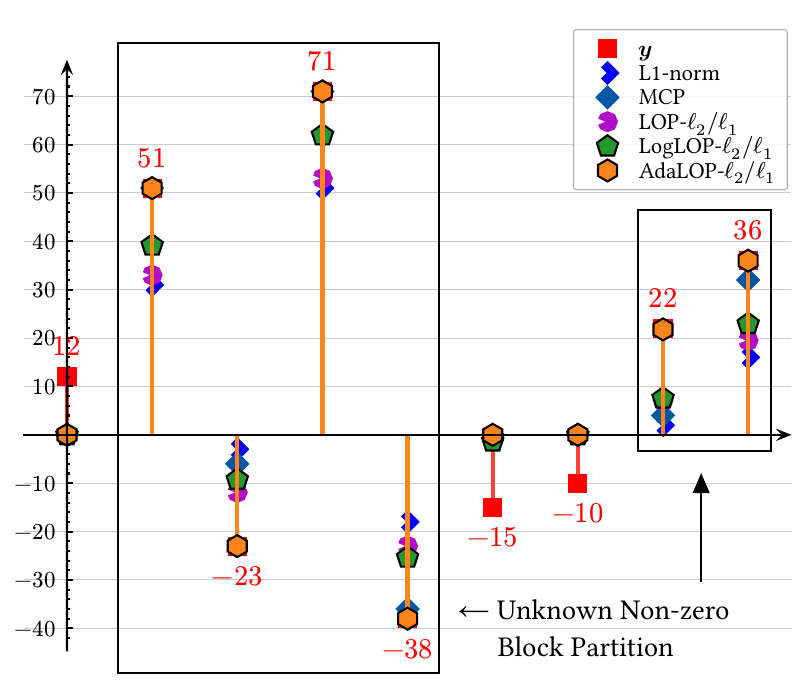}
    \caption{
        Comparison of proximal operators. Unlike LOP-\ltwolone, the proposed LogLOP-\ltwolone and AdaLOP-\ltwolone mitigate underestimation bias for large amplitudes (e.g., $y_2=51, y_4=71$) while preserving block partitions, outperforming element-wise penalties.
    }
    \zlabel{fig:compare_lopl2l1}
\end{figure}

This paper proposes two novel nonconvex LOP-based methods designed to enhance accuracy while preserving applicability in block-sparse signal recovery. As illustrated in \zcref{fig:compare_lopl2l1}, these methods significantly preserve large signal components better than convex approaches.
\begin{itemize}[itemsep=1ex,leftmargin=0cm]
    \item[] \textbf{Logarithmic LOP-\ltwolone (LogLOP-\ltwolone):} This method integrates a log-sum penalty into the LOP framework to mitigate underestimation within adaptively identified blocks. It is the first LOP-based approach to employ a logarithmic penalty for nearly unbiased recovery of block-sparse signals.    
    \item[] \textbf{Adaptively Weighted LOP-\ltwolone (AdaLOP-\ltwolone):} This method uses an adaptive weighting scheme inspired by EMCP. It updates weights according to the signal magnitude in identified blocks, better distinguishing significant coefficients.
\end{itemize}

The main contributions of this work are:
\begin{itemize}[leftmargin=*]
    \item We establish a general LOP-type Penalty framework unifying convex and the proposed nonconvex LOP-based regularizers. We prove that the \emph{asymptotically level stable} (als) property guarantees the existence of a minimizer.
    \item We propose LogLOP-\ltwolone and AdaLOP-\ltwolone penalties, the first methods to introduce logarithmic penalization and adaptive weighting into the LOP context. These approaches significantly reduce underestimation bias.
    \item    We develop efficient Alternating Direction Method of Multipliers (ADMM)-based algorithms for these formulations that exhibit stable convergence to the desired solutions empirically, despite the lack of theoretical guarantees due to nonconvexity (see the details in \zcref{rem:algs-convergence}).
    \item Unlike GME-LOP-\ltwolone and SBL-based methods, the proposed methods are compatible with \emph{any data fidelity term whose proximal operator is computable}~\cite{chauxVariationalFormulationFramebased2007, chierchiaProximityOperatorRepository2016}, extending their utility to diverse noise models and applications.
\end{itemize}

The remainder of this paper is organized as follows. \zcref{sec:preliminaries} reviews existing methods. \zcref{sec:proposed} details the proposed LogLOP-\ltwolone and AdaLOP-\ltwolone penalties. \zcref{sec:optimization} describes the optimization algorithms. \zcref{sec:experiments} presents numerical results, and \zcref{sec:conclusion} concludes the paper.

\section{Review of Existing Methods}
\zlabel{sec:preliminaries}

\subsection{Mathematical Notations}
Vectors and matrices are denoted by bold lowercase ($\bm a$) and uppercase ($\bm A$) letters, respectively. $\bm x \in \R^N$ denotes an $N$-dimensional vector with $n$-th component $x_n$, and $\bm X \in \R^{I \times J}$ denotes an $I \times J$ matrix with the $(i,j)$ element $X_{i,j}$. The transpose and Hermitian transpose are denoted by $(\cdot)^\top$ and $(\cdot)^\mathsf{H}$, respectively.
$\mathbb N$, $\mathbb R$, $\mathbb R_{>0}$, and $\mathbb R_{\geq 0}$ represent the sets of positive integers, real numbers, positive reals, and nonnegative reals, respectively. $\bm O$ and $\bm I$ denote the zero matrix and identity matrix, respectively.
The set $[N]$ denotes $\{1, \ldots, N\}$, and $\abs{S}$ is the cardinality of set $S$. For a vector $\bm x$ and index set $S$, $\bm x_S$ denotes the subvector indexed by $S$.
$\abs{x}$ is the absolute value, and $\sign(x)$ is the sign function. $\abs{\bm x}$ denotes the element-wise absolute vector. The $\ell_1$ and $\ell_2$ norms are defined as $\norm{\bm x}_1 = \sum_{n=1}^N \abs{x_n}$ and $\norm{\bm x}_2 = (\sum_{n=1}^N x_n^2)^{1/2}$. $\bm x \odot \bm y$ denotes the element-wise product. We also use $\bm w \odot h$ for a vector $\bm w$ and a scalar function $h$ to denote the function defined by $(\bm w \odot h)(\bm x) = \sum_{n=1}^N w_n h(x_n)$.
A function $h: \mathbb{R}^N \to \mathbb{R} \cup \{\infty\}$ is \emph{proper} if its domain $\dom(h) \coloneqq \{ \bm x \in \mathbb{R}^N \mid h(\bm x) < \infty \}$ is nonempty. It is \emph{lower semicontinuous} (lsc) if its lower level set $\{ \bm x \in \mathbb{R}^N \mid h(\bm x) \leq a \}$ is closed for every $a \in \mathbb{R}$. It is \emph{coercive} if $\lim_{\|\bm x\|_2\to\infty} h(\bm x) = \infty$.
The proximal operator of a proper, lsc convex function $h$ with $\beta > 0$ is
\begin{equation}
    \prox_{\beta h}(\bm v) \coloneqq \arg\min_{\bm x \in \R^N} \left\{ h(\bm x) + \frac{1}{2\beta} \norm{\bm x - \bm v}_2^2 \right\}.
\end{equation}

\subsection{Nonconvex Sparse Regularization}
Although the $\ell_1$ norm is a standard tool for sparse estimation, its convexity and coercivity induces an underestimation bias for large coefficients. Nonconvex regularizations provide a more accurate measure of sparsity~\cite{fanVariableSelectionNonconcave2001,strekalovskiyRealTimeMinimizationPiecewise2014,sasakiSparseRegularizationBased2024}.

The log-sum penalty~\cite{candesEnhancingSparsityReweighted2008} mitigates this bias:
\begin{equation}
    \Omega_{\epsilon}(\bm x) = \sum_{n=1}^N \log\left(\frac{\abs{x_n}}{\epsilon}+1\right),
\end{equation}
where $\epsilon > 0$ is a curvature parameter.
It approaches the $\ell_0$ pseudo-norm as $\epsilon \to 0$ and the $\ell_1$ norm as $\epsilon \to \infty$.
Its logarithmic growth penalizes large amplitudes less severely than the $\ell_1$ norm.                                 However, because the penalty is coercive and not upper bounded, the bias is reduced but not eliminated.

The Minimax Concave Penalty (MCP)~\cite{zhangNearlyUnbiasedVariable2010} addresses this by employing a bounded penalty:
\begin{equation}
    \Omega_{\gamma, w}(x) = \begin{cases}
        w \abs{x} - \frac{x^2}{2\gamma}, & \text{if } \abs{x} \leq \gamma w; \\
        \frac{\gamma w^2}{2},            & \text{if } \abs{x} > \gamma w,
    \end{cases}
\end{equation}
where $w > 0$ is the sparsity parameter and $\gamma > 1$ is the curvature parameter. It approaches the $\ell_0$ ($\gamma \to 1$) and $\ell_1$ ($\gamma \to \infty$) norms. By becoming constant for $\abs{x} > \gamma w$, MCP achieves unbiasedness for large coefficients.
MCP can be extended to vectors as $\Omega_{\gamma, \bm w}(\bm x) = \sum_{n} \Omega_{\gamma, w_n}(x_n)$ with $\bm w \in \mathbb{R}_{\geq 0}^N$.

Crucially, MCP admits a variational representation known as the vector Equivalent MCP (EMCP)~\cite{pokalaIterativelyReweightedMinimaxConcave2022}:
\begin{equation}
    \Omega_{\gamma, \bm w}(\bm x) = \min_{\bm \omega \in \mathbb R_{\geq 0}^N} \left\{\norm{\bm x}_{\bm \omega,1} + \frac{\gamma}{2} \norm{\bm \omega - \bm w}_2^2\right\},
    \label{eq:vector-emcp}
\end{equation}
where $\norm{\bm x}_{\bm \omega,1} = \sum_n \omega_n \abs{x_n}$. This interprets MCP as an optimally weighted $\ell_1$ penalty. The method in~\cite{pokalaIterativelyReweightedMinimaxConcave2022} proposes to update $\bm w$ adaptively to enhance the estimation accuracy. This adaptive capability is central to our proposed method.

\subsection{Latent Optimal Partitioned (LOP)-\ltwolone}
Standard block-sparse regularization relies on given block partitions. To address scenarios where the partitions are unknown, the convex LOP-\ltwolone penalty~\cite{9729560} simultaneously estimates the signal and its optimal block partitions.

The method aims to identify a block partitions $(\mathcal B_k)_{k=1}^j$ that minimizes the convex mixed \ltwolone norm:
\begin{equation}
    \psi_K(\bm x) \coloneqq \min_{j \in [K]}\left[
        \min_{(\mathcal B_k)_{k=1}^j\in\mathcal P_j}
        \sum_{k=1}^j \sqrt{\abs{\mathcal B_k}} \norm{\bm x_{\mathcal B_k}}_2
        \right],
    \label{eq:minimize-mixedl2l1norm}
\end{equation}
where $\mathcal P_j$ consists of all possible nonoverlapping block partitions with $j$ blocks~\cite[Eq. 3]{9729560}.
Because this combinatorial problem is intractable, LOP-\ltwolone employs a tight convex relaxation based on the variational form of the $\ell_2$ norm. The variational function $\phi: \mathbb R \times \mathbb R \to \mathbb R_{\geq 0} \cup \{\infty\}$ is defined as:
\begin{equation}
    \phi(x,\tau) \coloneqq \begin{cases}
        \displaystyle \frac{\abs{x}^2}{2\tau} + \frac{\tau}{2}, & \text{if } \tau > 0;                    \\
        0,                                                      & \text{if } x = 0 \text{ and } \tau = 0; \\
        \infty,                                                 & \text{otherwise}.
    \end{cases}
    \label{eq:var-l2norm}
\end{equation}
Using this, we define the LOP-\ltwolone penalty as follows:
\begin{equation}
    \Psi_\alpha(\bm x) \coloneqq \min_{\substack{\bm\sigma\in\R^{N} \\
            \norm{\bm{D\sigma}}_1 \leq \alpha}} \sum_{n=1}^N \phi(x_n,\sigma_n).
    \label{eq:lop-l2l1-penalty}
\end{equation}
The latent vector $\bm\sigma$ represents the block partitions, constrained by Total Variation (TV) $\norm{\bm{D\sigma}}_1 \leq \alpha$. The parameter $\alpha \geq 0$ controls the partition granularity, and $\bm D$ denotes the first-order difference matrix.
Although LOP-\ltwolone effectively discovers the block partitions, it inherits the convexity, leading to the same \emph{underestimation bias} for large amplitudes (\zcref{fig:compare_lopl2l1}).

\subsection{Generalized Moreau Enhancement (GME)-LOP-\ltwolone}
To mitigate this bias, GME-LOP-\ltwolone~\cite{kurodaConvexnonconvexFrameworkEnhancing2025} applies the Generalized Moreau Enhancement (GME) framework~\cite{lanzaSparsityInducingNonconvexNonseparable2019,abeLinearlyInvolvedGeneralized2020,al-shabiliSharpeningSparseRegularizers2021} to LOP-\ltwolone. It constructs a nonconvex penalty $\Psi_{\mathrm{GL}}(\bm{x})$ by subtracting the generalized Moreau envelope of $\Psi_{\alpha}(\bm{x})$:
\begin{equation}
    \Psi_{\mathrm{GL}}(\bm{x}) = \Psi_{\alpha}(\bm{x}) - \min_{\bm{v} \in \mathbb{R}^N} \left\{ \Psi_{\alpha}(\bm{v}) + \frac{1}{2} \norm{\bm{B}(\bm{x}-\bm{v})}_2^2 \right\}.
    \label{eq:gme-lop-l2l1}
\end{equation}
A key feature of GME-LOP-\ltwolone is that, under specific conditions on the GME parameter matrix $\bm B$ (e.g., $\bm{A}^\top\bm{A} - \lambda\bm{L}^\top\bm{B}^\top\bm{BL} \succeq \bm{O}$), the overall regularized least-squares objective remains convex. Consequently, its convergence to the global minimum is guaranteed.

However, this convexity guarantee is strictly tied to the squared error loss. This restricts its applicability to other data fidelity terms, motivating the need for the more flexible nonconvex LOP-based methods proposed in this work. Note that while our proposed methods do not possess the same theoretical convergence guarantee as GME-LOP-\ltwolone (see \zcref{rem:algs-convergence}), they exhibit stable convergence behavior empirically, even with nonsquared error losses.
\section{Design of Proposed Regularizations}
\zlabel{sec:proposed}

Existing LOP-based methods face a dilemma: the convex LOP-\ltwolone penalty suffers from underestimation bias, while the nonconvex GME-LOP-\ltwolone penalty is restricted to squared error loss.
To overcome these limitations, we propose two novel nonconvex frameworks that combine the block-discovery capability of LOP-\ltwolone with enhanced bias mitigation, without restricting the data fidelity term.

\begin{itemize}[leftmargin=0cm, topsep=0.5ex, itemsep=0.5ex] \sloppy
    \item[] \textbf{Logarithmic LOP-\ltwolone (LogLOP-\ltwolone):} Mitigates underestimation by replacing the linear penalty on block magnitude with a logarithmic function.
    \item[] \textbf{Adaptively Weighted LOP-\ltwolone (AdaLOP-\ltwolone):} Jointly optimizes block partitions and regularization weights to automatically down-weight significant components.
\end{itemize}

As shown in \zcref{fig:asymptotic_behaviors}, the asymptotic forms of both LogLOP-\ltwolone and AdaLOP-\ltwolone (discussed in \zcref{thm:var-rwl1, thm:ada-lop-l2l1}) exhibit clear nonconvex characteristics. The penalty growth diminishes for large signal amplitudes, which mitigates the estimation bias compared to the convex LOP-\ltwolone penalty.

\begin{figure}[tb]
    \centering
    \subfloat[LOP ($\alpha \to 0$)]{%
        \includegraphics[width=0.31\linewidth]{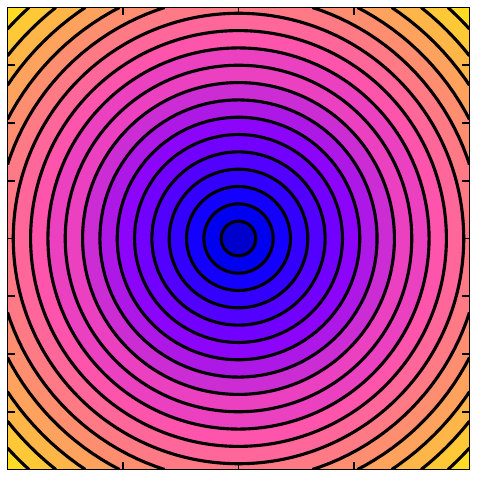}%
        \zlabel{fig:lop_alpha_0}
    }\hfill
    \subfloat[LogLOP ($\alpha \to 0$)]{%
        \includegraphics[width=0.31\linewidth]{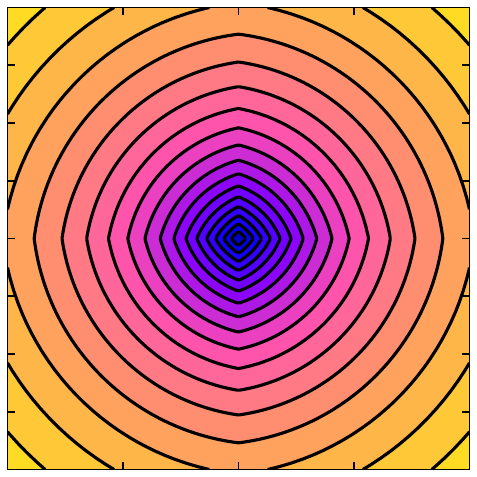}%
        \zlabel{fig:relop_alpha_0}
    }\hfill
    \subfloat[AdaLOP ($\alpha \to 0$)]{%
        \includegraphics[width=0.31\linewidth]{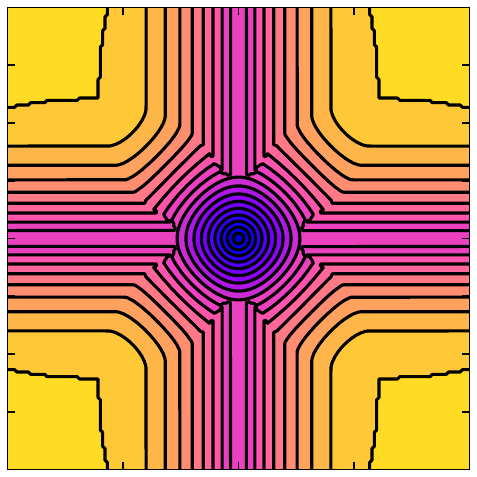}%
        \zlabel{fig:adalop_alpha_0}
    }\\
    \subfloat[LOP ($\alpha \to \infty$)]{%
        \includegraphics[width=0.31\linewidth]{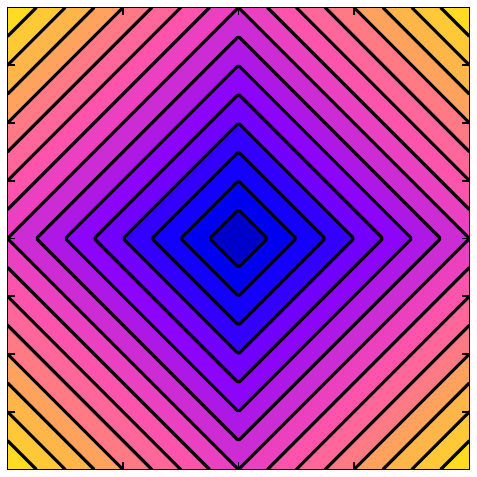}%
        \zlabel{fig:lop_alpha_inf}
    }\hfill
    \subfloat[LogLOP ($\alpha \to \infty$)]{%
        \includegraphics[width=0.31\linewidth]{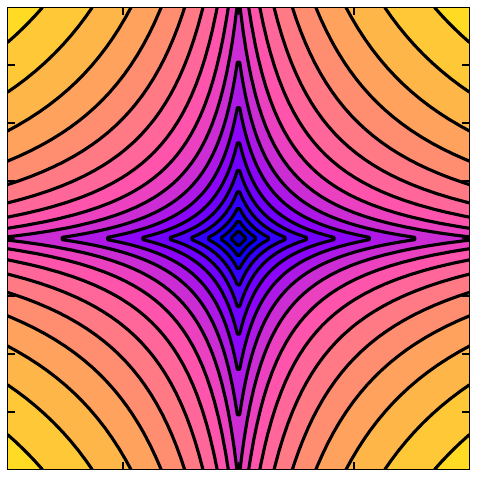}%
        \zlabel{fig:relop_alpha_inf}
    }\hfill
    \subfloat[AdaLOP ($\alpha \to \infty$)]{%
        \includegraphics[width=0.31\linewidth]{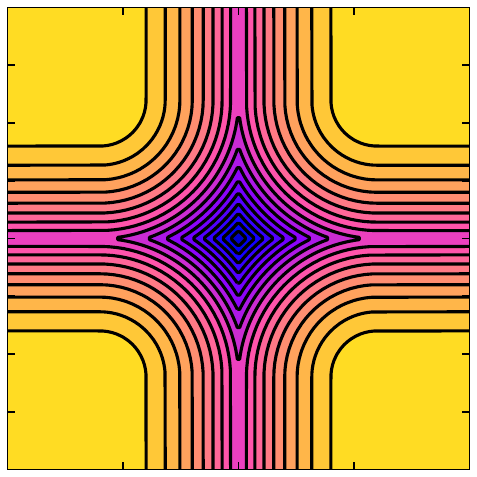}%
        \zlabel{fig:adalop_alpha_inf}
    }
    \caption{Asymptotic behaviors of LOP, LogLOP (\zcref{thm:var-rwl1}), and AdaLOP (\zcref{thm:ada-lop-l2l1}) variants for \ltwolone regularization.}
    \zlabel{fig:asymptotic_behaviors}
\end{figure}

\subsection{General Variational Framework for Optimal Partitioning}
To systematically address the bias issue, we first abstract the LOP-\ltwolone mechanism into a general variational framework.
The core idea of LOP-\ltwolone is to evaluate signal components $x_n$ via a latent variable $\sigma_n$ that captures the optimal block partitions. Unlike GME-LOP-\ltwolone , which relies on GME for non-convexification, we define a general LOP-type penalty $\Psi_{\alpha, \varphi}$ by directly employing a non-convex variational function $\varphi: \R \times \R \to \R_{\geq 0} \cup \{\infty\}$ as follows.

\begin{definition}[General LOP-type Penalty]
    \zlabel{def:general-lop-type-penalty}
    Let $\varphi: \R \times \R \to \R_{\geq 0} \cup \{\infty\}$ be a proper and lsc function. Assume that $\varphi$ satisfies the \emph{asymptotically level stable} (als) property~\cite[Def. 3.3.2]{alfredauslenderAsymptoticConesFunctions2003}.
    For a given partition parameter $\alpha \ge 0$, the general LOP-type penalty $\Psi_{\alpha, \varphi}: \R^N \to \R_{\geq 0} \cup \{\infty\}$ is defined as:
    \begin{equation}
        \Psi_{\alpha, \varphi}(\bm x) \coloneqq \min_{\substack{\bm\sigma\in\R^{N} \\ \norm{\bm{D\sigma}}_1 \leq \alpha}} \sum_{n=1}^N \varphi(x_n, \sigma_n).
        \label{eq:general-lop-framework}
    \end{equation}
\end{definition}

The als property characterizes the stability of level sets at infinity and generalizes the assumption of coercivity, ensuring the existence of a minimizer for $\Psi_{\alpha, \varphi}(\bm x)$ (see \zcref{app:existence_min_adalop} for details). Furthermore, it ensures that a minimizer $(\hat{\bm x}, \hat{\bm \sigma})$ exists when minimizing an objective (e.g., \zcref{eq:min-relop-l2l1}) composed of this penalty and a proper, lsc, coercive and convex function $f$.

The variational function $\varphi(x, \sigma)$ is designed as a variational upper bound of a base penalty $\Omega(x)$ (e.g., $\ell_1$ norm~\cite[Theorem 2]{9729560}, log-sum and MC penalty), satisfying $\min_{\sigma \in \mathbb{R}} \varphi(x, \sigma) = \Omega(x)$. Minimizing this upper bound under the TV constraint allows the framework to discover the optimal block partitions while applying the properties of $\Omega(x)$.
LOP-\ltwolone uses the function in \eqref{eq:var-l2norm} that produces the absolute penalty $\abs{x}$.
To reduce estimation bias, LogLOP-\ltwolone and AdaLOP-\ltwolone employ nonconvex variational functions satisfying the als condition, ensuring the well-definedness of the penalties and the existence of minimizers.

\subsection{Logarithmic LOP-\ltwolone (LogLOP-\ltwolone)}
\subsubsection{Variational Formulation}
We realize LogLOP-\ltwolone by choosing a variational function $\phi_\epsilon$ that induces logarithmic growth. We define $\phi_{\epsilon}:\mathbb R \times \mathbb R \to \mathbb R_{\geq 0} \cup \{\infty\}$ with $\epsilon > 0$:
\begin{equation}
    \phi_\epsilon(x,\tau) \coloneqq \left\{ \begin{array}{@{\!\;}lr}
        \displaystyle \frac{1}{2\tau} \left(\frac{\abs{x}}{\epsilon}+1\right)^2 + \frac12\log\tau -\frac12, & \text{if } \tau \geq 1; \\
        \infty,                                                                                             & \text{otherwise}.
    \end{array} \right.
    \label{eq:var-rwl1}
\end{equation}
We design this function such that its minimum over $\tau$ recovers the logarithmic penalty, as proved in \zcref{lem:var-rwl1}.
\begin{lemma}[Minimum Value of $\phi_\epsilon(x,\tau)$]
    \zlabel{lem:var-rwl1}
    For any fixed $x \in \mathbb R$ and $\epsilon > 0$, $\phi_\epsilon(x,\tau)$ has a unique global minimizer $\tau^* = (\abs{x}/\epsilon + 1)^2$ over $\tau \in \mathbb R$, and the minimum value is:
    \begin{equation}
        \min_{\tau \in \mathbb R}\phi_\epsilon(x,\tau) = \min_{\tau \geq 1}\phi_\epsilon(x,\tau) = \log\left(\frac{\abs{x}}{\epsilon}+1\right).
    \end{equation}
\end{lemma}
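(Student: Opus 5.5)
Since $\phi_\epsilon(x,\tau)=\infty$ for $\tau<1$ and is finite for every $\tau\ge 1$, minimizing over $\tau\in\mathbb R$ is the same as minimizing over $[1,\infty)$; this gives the first equality in the statement. It remains to minimize the one-dimensional function
\begin{equation*}
g(\tau) \coloneqq \frac{c^2}{2\tau} + \frac12\log\tau - \frac12, \qquad \tau\ge 1, \quad c \coloneqq \frac{\abs{x}}{\epsilon}+1 \ge 1 .
\end{equation*}

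I will locate the minimizer by calculus. Differentiating gives
\begin{equation*}
g'(\tau) = -\frac{c^2}{2\tau^2}+\frac{1}{2\tau} = \frac{\tau-c^2}{2\tau^2}.
\end{equation*}
So $g'(\tau)<0$ on $[1,c^2)$ and $g'(\tau)>0$ on $(c^2,\infty)$. Hence $g$ is strictly decreasing then strictly increasing on $[1,\infty)$. Because $c^2\ge 1$, the stationary point $\tau^*=c^2=(\abs{x}/\epsilon+1)^2$ lies in the feasible set, so it is the unique global minimizer.

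This argument covers the boundary case $c=1$ (that is, $x=0$), where $\tau^*=1$ is the endpoint and $g$ is strictly increasing on $(1,\infty)$. Uniqueness over all of $\mathbb R$ follows because the value is $\infty$ off $[1,\infty)$.

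Substituting $\tau^*$ gives
\begin{equation*}
g(c^2)=\tfrac12+\tfrac12\log c^2-\tfrac12=\log c=\log(\abs{x}/\epsilon+1),
\end{equation*}
which is the claimed minimum value.

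The only point needing care is checking that the unconstrained stationary point satisfies the constraint $\tau\ge 1$. This holds precisely because the shift $+1$ inside the logarithm makes $c\ge1$. No earlier result of the paper is needed.
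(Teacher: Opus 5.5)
Your proposal is correct and follows essentially the same route as the paper: you compute $\partial_\tau \phi_\epsilon = (\tau - c^2)/(2\tau^2)$, use its sign change at $c^2 = (\abs{x}/\epsilon+1)^2 \geq 1$ to get the unique minimizer, and substitute to obtain $\log(\abs{x}/\epsilon+1)$. Your explicit remarks on the first equality (since $\phi_\epsilon = \infty$ for $\tau<1$) and on the endpoint case $x=0$ make precise points the paper leaves implicit.
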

\begin{proof}
    Let $z \coloneqq (\abs{x}/\epsilon+1)^2$. Since $x \in \mathbb{R}$ and $\epsilon > 0$, notice that $z \geq 1$.
    Considering $\phi_\epsilon(x, \tau)$ as a function of $\tau$ for $\tau \geq 1$, its derivative is
    \begin{equation}
        \frac{\partial \phi_\epsilon}{\partial \tau}(x, \tau) = -\frac{z}{2\tau^2} + \frac{1}{2\tau} = \frac{\tau - z}{2\tau^2}.
    \end{equation}
    Thus, $\phi_\epsilon(x, \tau)$ is strictly decreasing for $1 \le \tau < z$ and strictly increasing for $\tau > z$, confirming that $\tau^* = z$ is the unique minimizer.
    Substituting $\tau^* = z$ yields the minimum value:
    \begin{equation}
        \phi_\epsilon(x, z) = \frac{z}{2z} + \frac12\log z - \frac12 = \log\qty(\frac{\abs{x}}{\epsilon}+1).
    \end{equation}
    This completes the proof.
\end{proof}

\subsubsection{Definition of LogLOP-\ltwolone}
Substituting this $\phi_\epsilon$ into the general framework \eqref{eq:general-lop-framework} yields the LogLOP-\ltwolone penalty:
\begin{align}
    \Psi_{\alpha,\epsilon}(\bm x) \coloneqq \min_{\substack{\bm\sigma\in\R^{N} \\
            \norm{\bm{D\sigma}}_1 \leq \alpha}} \sum_{n=1}^N \phi_{\epsilon}(x_n,\sigma_n).
    \label{eq:relop-l2l1}
\end{align}
This penalty can discover the block partitions such as LOP-\ltwolone while evaluating the magnitude of these blocks using a logarithmic scale, directly solving the underestimation issue.

\subsubsection{Asymptotic Behavior}
The partition parameter $\alpha \geq 0$ seamlessly interpolates between the element-wise log-sum sparsity and a logarithmic block-sparse penalty:
\begin{theorem}[Asymptotic Behavior of LogLOP-\ltwolone]
    \zlabel{thm:var-rwl1}
    (i) As $\alpha\to\infty$ (finest partitions), $\Psi_{\alpha,\epsilon}$ recovers the log-sum penalty:
    \begin{equation}
        \lim_{\alpha\to\infty} \Psi_{\alpha,\epsilon}(\bm x) = \Omega_{\epsilon}(\bm x) = \sum_{n=1}^N \log\left(\frac{\abs{x_n}}{\epsilon}+1\right).
    \end{equation}
    (ii) As $\alpha\to0$ (coarsest partitions), it converges to a logarithmic $\ell_2$ penalty as shown in \zcref{fig:relop_alpha_0}:
    \begin{equation}
        \lim_{\alpha\to0} \Psi_{\alpha,\epsilon}(\bm x) = \Psi_{0,\epsilon}(\bm x)  = N\log\sqrt{\frac1N{\sum_{n=1}^N \left(\frac{\abs{x_n}}{\epsilon}+1\right)^2}}.
    \end{equation}
\end{theorem}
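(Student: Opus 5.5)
We write $z_n \coloneqq (\abs{x_n}/\epsilon+1)^2 \ge 1$ and $g(\bm\sigma) \coloneqq \sum_n \phi_\epsilon(x_n,\sigma_n)$, which is finite exactly when $\bm\sigma \ge \bm 1$ componentwise. The central observation is that $\Psi_{\alpha,\epsilon}(\bm x)$ is nonincreasing in $\alpha$, because enlarging $\alpha$ enlarges the feasible set $\{\norm{\bm{D\sigma}}_1\le\alpha\}$. Both limits therefore exist in $\R_{\ge 0}$, and we only need to identify them.

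\emph{Part (i).} For the lower bound, take any feasible $\bm\sigma$ and apply \zcref{lem:var-rwl1} componentwise: $g(\bm\sigma) \ge \sum_n \min_\tau \phi_\epsilon(x_n,\tau) = \Omega_\epsilon(\bm x)$. Hence $\Psi_{\alpha,\epsilon}(\bm x)\ge\Omega_\epsilon(\bm x)$ for every $\alpha$. For the upper bound, choose $\bm\sigma^*$ with $\sigma^*_n = z_n$. This choice is independent of $\alpha$ and has finite total variation $\norm{\bm D\bm\sigma^*}_1 = \sum_n\abs{z_{n+1}-z_n}$. For every $\alpha$ at least this quantity, $\bm\sigma^*$ is feasible, so $\Psi_{\alpha,\epsilon}(\bm x)\le g(\bm\sigma^*)=\Omega_\epsilon(\bm x)$ by the lemma. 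The limit is therefore attained at a finite $\alpha$ and equals $\Omega_\epsilon(\bm x)$.

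\emph{Part (ii).} First compute $\Psi_{0,\epsilon}$. The constraint $\norm{\bm{D\sigma}}_1\le 0$ forces $\bm\sigma = \tau\bm 1$ with $\tau\ge1$, so
\begin{equation}
g(\tau\bm 1)=\frac{S}{2\tau}+\frac N2\log\tau-\frac N2, \qquad S\coloneqq\sum_n z_n .
\end{equation}
The derivative argument of \zcref{lem:var-rwl1} shows this is minimized at $\tau^*=S/N\ge1$. The minimum value is $\frac N2\log(S/N)$, which is the claimed $N\log\sqrt{S/N}$.

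It remains to show continuity at $\alpha=0$, namely $\lim_{\alpha\to0}\Psi_{\alpha,\epsilon}=\Psi_{0,\epsilon}$. Monotonicity gives $\lim\le\Psi_{0,\epsilon}$. For the reverse inequality, take minimizers $\bm\sigma^{(\alpha)}$; their existence is guaranteed by the als property, as noted after \zcref{def:general-lop-type-penalty}. We need these to stay bounded. Since $\phi_\epsilon(x,\tau)\ge\frac12\log\tau-\frac12$ and every term in $g$ is nonnegative, we get $\frac12\log\sigma^{(\alpha)}_n-\frac12\le g(\bm\sigma^{(\alpha)})\le\Psi_{0,\epsilon}(\bm x)$. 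This bounds each component uniformly in $\alpha$, so the minimizers lie in a compact subset of $[1,\infty)^N$.

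Now take a sequence $\alpha_k\to0$ along which $g(\bm\sigma^{(\alpha_k)})$ converges to $\liminf_{\alpha\to0}\Psi_{\alpha,\epsilon}$, and extract a further convergent subsequence $\bm\sigma^{(\alpha_k)}\to\bar{\bm\sigma}$. Continuity of the norm gives $\norm{\bm D\bar{\bm\sigma}}_1=0$, and $\bar{\bm\sigma}\ge\bm 1$, so $\bar{\bm\sigma}$ is feasible for the $\alpha=0$ problem. Since $g$ is continuous on $[1,\infty)^N$, we obtain $\Psi_{0,\epsilon}\le g(\bar{\bm\sigma})=\lim_k g(\bm\sigma^{(\alpha_k)})=\liminf_{\alpha\to0}\Psi_{\alpha,\epsilon}$. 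Combined with monotonicity, this gives the limit.

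The main obstacle is this compactness and lower-semicontinuity step at $\alpha\to0$, since the argument relies on a uniform bound on the latent minimizers. The elementary bound above appears sufficient. If one prefers not to rely on existence of minimizers, the same argument goes through with $\delta$-minimizers.
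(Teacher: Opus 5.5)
Your proof is correct, and its computational core coincides with the paper's (Appendix~B). For (i) you use the componentwise minimizer $\sigma_n=(|x_n|/\epsilon+1)^2$ from the lemma. For (ii) you reduce the $\alpha=0$ problem to constant vectors $\bm\sigma=\tau\bm 1$ and minimize the resulting scalar function at $\tau=S/N$.

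The difference lies in how the limits are handled. The paper only asserts that the constraint ``becomes inactive'' as $\alpha\to\infty$ and ``enforces $\|\bm D\bm\sigma\|_1=0$'' as $\alpha\to 0$. In effect it computes the value of the problem at $\alpha=0$ and tacitly identifies $\lim_{\alpha\to0}\Psi_{\alpha,\epsilon}(\bm x)$ with $\Psi_{0,\epsilon}(\bm x)$. You make both steps rigorous:
\begin{itemize}
\item Monotonicity in $\alpha$ plus the explicit threshold $\alpha\ge\|\bm D\bm\sigma^*\|_1$ shows that $\Psi_{\alpha,\epsilon}(\bm x)=\Omega_\epsilon(\bm x)$ holds exactly for all sufficiently large $\alpha$.
\item Your compactness argument proves right-continuity of $\alpha\mapsto\Psi_{\alpha,\epsilon}(\bm x)$ at $0$. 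It rests on the uniform bound $\sigma^{(\alpha)}_n\le\exp(2\Psi_{0,\epsilon}(\bm x)+1)$, which comes from the $\frac12\log\tau$ term of $\phi_\epsilon$.
\end{itemize}
The paper's version is shorter. Yours is the one that actually establishes the limit asserted in (ii), rather than only the value at $\alpha=0$.

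As a side remark, the same coercivity of $\phi_\epsilon$ in $\tau$ gives existence of minimizers by a direct Weierstrass argument on the closed set $\{\bm\sigma\ge\bm 1,\ \|\bm D\bm\sigma\|_1\le\alpha\}$. So the als machinery is not needed here, and, as you note, $\delta$-minimizers would suffice anyway.
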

\begin{proof}
    See \zcref{app:var-rwl1} analogous to~\cite[App. C]{9729560}.
\end{proof}

\subsubsection{Validity for Block Partitioning}
A critical requirement for any LOP-based penalty is the ability to distinguish nonzero blocks from zero blocks as discussed in~\cite[Remark 1]{9729560}.
We define the penalty on a specific block $\mathcal B$ as:
\begin{equation}
    \small
    \Theta_{\epsilon}(\bm x_{\mathcal B}) \coloneqq \min_{\tau \geq 1} \sum_{n \in \mathcal B} \phi_{\epsilon}(x_n, \tau) = \abs{\mathcal B} \log \sqrt{\frac{1}{\abs{\mathcal B}} \sum_{n \in \mathcal B} \left(\frac{|x_n|}{\epsilon}+1\right)^2}.
\end{equation}
The LogLOP-\ltwolone penalty satisfies the following property:

\begin{proposition}[Block Decomposition Property]
    \zlabel{prop:monotonicity-of-f}
    For any block $\mathcal B$ decomposable into a nonzero subblock $\mathcal B'$ ($\bm x_{\mathcal B'} \neq \bm 0$) and a zero-valued subblock $\mathcal B''$ ($\bm x_{\mathcal B''} = \bm 0$), the penalty strictly decreases upon separation:
    \begin{equation}
        \Theta_{\epsilon}(\bm x_{\mathcal B}) > \Theta_{\epsilon}(\bm x_{\mathcal B^{'}}) + \Theta_{\epsilon}(\bm x_{\mathcal B^{''}}) = \Theta_{\epsilon}(\bm x_{\mathcal B^{'}}),
        \label{eq:block_decomposition_prop}
    \end{equation}
    where $\Theta_{\epsilon}$ represents the penalty value on a block.
\end{proposition}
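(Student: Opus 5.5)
Let $m\coloneqq\abs{\mathcal B'}\geq 1$, $p\coloneqq\abs{\mathcal B''}\geq 1$, and $u_n\coloneqq \abs{x_n}/\epsilon+1\geq 1$ for $n\in\mathcal B'$. We first dispose of the equality in \eqref{eq:block_decomposition_prop}. Since every entry of $\bm x_{\mathcal B''}$ is zero, each term $(\abs{x_n}/\epsilon+1)^2$ with $n\in\mathcal B''$ equals $1$, so
\begin{equation}
    \Theta_\epsilon(\bm x_{\mathcal B''}) = p\log\sqrt{1}=0 .
\end{equation}
This follows directly from the closed form of $\Theta_\epsilon$, which in turn follows from \zcref{lem:var-rwl1}-type reasoning: the minimizer of $\sum_n\phi_\epsilon(x_n,\tau)$ over $\tau\geq 1$ is the mean of $(\abs{x_n}/\epsilon+1)^2$.

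For the strict inequality, write $S\coloneqq\sum_{n\in\mathcal B'}u_n^2$ and use the closed form. The claim becomes
\begin{equation}
    \frac{m+p}{2}\log\frac{S+p}{m+p} > \frac{m}{2}\log\frac{S}{m}.
\end{equation}
We plan to view the left side as a function of the number of appended unit entries. Define
\begin{equation}
    g(t)\coloneqq (m+t)\log\frac{S+t}{m+t},\qquad t\geq 0 .
\end{equation}
Then $g(0)=m\log(S/m)$, and it suffices to show that $g$ is strictly increasing on $[0,p]$.

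Differentiating gives
\begin{equation}
    g'(t)=\log\frac{S+t}{m+t}+\frac{m+t}{S+t}-1 .
\end{equation}
With $r\coloneqq (S+t)/(m+t)$, this reads $g'(t)=\log r+1/r-1$. The function $h(r)=\log r+1/r-1$ satisfies $h(1)=0$ and $h'(r)=(r-1)/r^2$, so $h(r)>0$ for every $r>1$. It therefore remains to verify $r>1$, that is, $S>m$. This holds because each $u_n\geq1$, so $S\geq m$, and because $\bm x_{\mathcal B'}\neq\bm 0$, at least one $u_n>1$, which makes $S>m$ strictly. Hence $g'(t)>0$ on $[0,p]$, so $g(p)>g(0)$, which is the desired inequality.

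The main point needing care is this strictness step, namely that $S>m$. It uses exactly the hypothesis $\bm x_{\mathcal B'}\neq\bm0$; without it we would have $r=1$ and equality. An alternative route would be a direct convexity (log-sum) inequality, but the monotonicity of $g$ is cleaner. The remaining work is the routine verification of the derivative of $g$.
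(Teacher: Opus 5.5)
Your proof is correct, but it runs the monotonicity argument along a different variable than the paper does.

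The paper keeps the block sizes $M=\abs{\mathcal B}$ and $L=\abs{\mathcal B'}$ fixed. It treats the energy $z=\sum_{n\in\mathcal B'}(\abs{x_n}/\epsilon+1)^2$ as the free variable and studies the gap $\Delta(z)=M\log\frac{z+M-L}{M}-L\log\frac{z}{L}$ (it drops the harmless factor $1/2$). This gap vanishes at $z=L$, the all-zero configuration, and its derivative factors as $\Delta'(z)=\frac{(M-L)(z-L)}{z(z+M-L)}$. Positivity therefore reduces to a sign check on a rational function.

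You instead keep the signal energy $S$ fixed and vary continuously the number $t$ of appended zero entries. Your key step then becomes the elementary inequality $\log r>1-1/r$ for $r=(S+t)/(m+t)>1$.

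Both arguments are equally short and rest on the same fact, $S>m$ (equivalently $z>L$), which is exactly where $\bm x_{\mathcal B'}\neq\bm 0$ enters. They illustrate the property from complementary angles:
\begin{itemize}
\item The paper's version shows that the benefit of separating grows with the amplitude of the nonzero subblock.
\item Yours shows that the block penalty increases monotonically in the amount of zero padding.
\end{itemize}

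You also verify $\Theta_{\epsilon}(\bm x_{\mathcal B''})=0$ explicitly, which the paper's appendix leaves implicit.
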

\begin{proof}
    We analyze the penalty difference $\Delta$ before and after separating the zero block. Let $z \coloneqq \sum_{n \in \mathcal B'} (\abs{x_n}/\epsilon+1)^2$. Since $(\abs{x_n}/\epsilon+1)^2 \geq 1$ holds for all $n$ and strictly exceeds 1 if $x_n \ne 0$, $\bm x_{\mathcal B'} \neq \bm 0$ guarantees $z > \abs{\mathcal B'}$. We show that $\Delta(z)$ is strictly increasing for $z > \abs{\mathcal B'}$ and $\Delta(\abs{\mathcal B'}) = 0$. This implies $\Delta(z) > 0$ for any $z > \abs{\mathcal B'}$, proving that separation always reduces the penalty. The proof is provided in \zcref{app:monotonicity-of-f}.
\end{proof}
This property guarantees that the LogLOP-\ltwolone penalty actively drives the partitions to separate nonzero signal components from zero regions, ensuring accurate support recovery.

\subsection{Adaptively Weighted LOP-\ltwolone (AdaLOP-\ltwolone)}
\subsubsection{Variational Formulation}
We realize the AdaLOP-\ltwolone penalty by choosing an nonconvex variational function $\phi_{\gamma, w}$. Inspired by the adaptive weighting mechanism of EMCP~\cite{pokalaIterativelyReweightedMinimaxConcave2022,zhangTensorRecoveryBased2023}, we define this function explicitly as:
\begin{equation}
    \phi_{\gamma, w}(x, \tau) \coloneqq \begin{cases}
        w \phi(x, \tau) - \frac{\phi(x, \tau)^2}{2\gamma}, & \text{if } \phi(x, \tau) \leq \gamma w; \\
        \frac{\gamma w^2}{2},                              & \text{if } \phi(x, \tau) > \gamma w,
    \end{cases}
    \label{eq:var-lop-l2l1-def}
\end{equation}
where $\phi(x, \tau)$ is the variational function \zcref{eq:var-l2norm}.
This function provides a bounded penalty and can be equivalently expressed through a variational form involving an adaptive weight $\omega$:
\begin{equation}
    \phi_{\gamma, w}(x, \tau) = \min_{\omega \in \mathbb R_{\geq 0}} \left\{ \omega \phi(x, \tau) + \frac{\gamma}{2} (\omega - w)^2 \right\}.
    \label{eq:eq-var-lop-l2l1}
\end{equation}
The minimum is achieved at $\omega = \max(0, w - \gamma^{-1}\phi(x, \tau))$.
Here, we adopt the rule $0 \cdot \infty = 0$. This ensures that even when $\phi(x, \tau) = \infty$, the minimization is well-defined (attained at $\omega=0$) and consistent with the explicit definition \eqref{eq:var-lop-l2l1-def}, preserving the lower semicontinuity required for the ADMM optimization described in \zcref{sec:optimization}.
\begin{lemma}[Minimum Value of $\phi_{\gamma, w}(x, \tau)$]
    \zlabel{lem:var-ada-lop-l2l1}
    For any fixed $x \in \mathbb R$, $\gamma > 0$, and $w > 0$, the minimum value of $\phi_{\gamma, w}(x, \tau)$ with respect to $\tau \in \mathbb R$ recovers the MCP $\Omega_{\gamma, w}(x)$:
    \begin{equation}
        \min_{\tau \in \mathbb R} \phi_{\gamma, w}(x, \tau) = \min_{\tau \in \mathbb R_{\geq 0}} \phi_{\gamma, w}(x, \tau) = \Omega_{\gamma, w}(x).
    \end{equation}
    The minimum is attained at $\tau = \abs{x}$ i.e. $\phi(x, \tau) = \abs{x}$.
\end{lemma}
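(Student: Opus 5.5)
The proof factors through a scalar function. Define $g(t) \coloneqq w t - t^2/(2\gamma)$ for $0 \le t \le \gamma w$ and $g(t) \coloneqq \gamma w^2/2$ for $t > \gamma w$, with $g(\infty) \coloneqq \gamma w^2/2$. By \eqref{eq:var-lop-l2l1-def}, $\phi_{\gamma,w}(x,\tau) = g(\phi(x,\tau))$ for every $(x,\tau)$, including the case $\phi(x,\tau)=\infty$ under the stated $0\cdot\infty=0$ convention. Also, $g$ is exactly the scalar MCP profile: $\Omega_{\gamma,w}(x) = g(\abs{x})$.

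The first step is to show that $g$ is nondecreasing on $[0,\infty]$. On $[0,\gamma w]$ we have $g'(t) = w - t/\gamma \ge 0$. At $t=\gamma w$ the quadratic branch equals $\gamma w^2 - \gamma w^2/2 = \gamma w^2/2$, which matches the constant branch, so $g$ is continuous there and constant afterwards. Equivalently, $g(t)=\min_{\omega\ge 0}\{\omega t + \tfrac{\gamma}{2}(\omega-w)^2\}$ by \eqref{eq:eq-var-lop-l2l1}. This is a pointwise minimum of functions nondecreasing in $t$ (since $\omega\ge0$), so it is nondecreasing.

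The second step is the elementary variational bound for $\phi$ from \eqref{eq:var-l2norm}: $\min_{\tau\in\mathbb R}\phi(x,\tau) = \abs{x}$, attained at $\tau=\abs{x}$.
\begin{itemize}
\item For $\tau>0$, AM--GM gives $\frac{x^2}{2\tau}+\frac{\tau}{2}\ge\abs{x}$, with equality iff $\tau=\abs{x}$.
\item For $x=0$, the point $\tau=0$ gives $0=\abs{x}$.
\item All other $\tau$, in particular all $\tau<0$, give $\infty$.
\end{itemize}
Hence $\phi(x,\tau)\ge\abs{x}$ for all $\tau\in\mathbb R$, with equality at $\tau=\abs{x}\ge 0$.

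Combining the two steps:
\begin{itemize}
\item For every $\tau\in\mathbb R$, monotonicity of $g$ gives $\phi_{\gamma,w}(x,\tau)=g(\phi(x,\tau))\ge g(\abs{x})=\Omega_{\gamma,w}(x)$.
\item At $\tau=\abs{x}$ we get $\phi_{\gamma,w}(x,\abs{x})=g(\abs{x})=\Omega_{\gamma,w}(x)$.
\end{itemize}
So the minimum over $\mathbb R$ equals $\Omega_{\gamma,w}(x)$ and is attained at $\tau=\abs{x}$. Because this minimizer lies in $\mathbb R_{\ge0}$, and $\mathbb R_{\ge 0}\subset\mathbb R$ only makes the minimum larger or equal, the minimum over $\mathbb R_{\ge0}$ coincides with it.

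The main point needing care is the monotonicity of $g$ including the value $\infty$ and the branch junction. This is what lets the $\tau$-minimization pass through $g$. It is also worth noting that the minimizer need not be unique once $\abs{x}>\gamma w$, since then any $\tau$ with $\phi(x,\tau)\ge\gamma w$ also attains the minimum. The lemma only claims attainment at $\tau=\abs{x}$, which is what we prove.
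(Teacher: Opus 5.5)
Your proof is correct and follows essentially the same route as the paper's proof in Appendix D. Both extend the MCP profile $g$ to $[0,\infty]$, show it is nondecreasing (including continuity at $z=\gamma w$), and combine this with $\phi(x,\tau)\ge\abs{x}$, with equality at $\tau=\abs{x}$. You go slightly further by proving the bound on $\phi$ directly via AM--GM, where the paper cites it, and by stating explicitly why the minima over $\mathbb R$ and $\mathbb R_{\ge 0}$ coincide.
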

\begin{proof}
    See \zcref{app:proof_adalopl2l1_upper_bounds_mcp}.
\end{proof}

\subsubsection{Definition of AdaLOP-\ltwolone}
Substituting $\phi_{\gamma, w}$ into the general framework \eqref{eq:general-lop-framework} yields the AdaLOP-\ltwolone penalty:
\begin{align}
     & \Psi_{\alpha, \gamma, \bm w} (\bm x) \coloneqq \min_{\substack{\bm\sigma\in\R^{N} \\ \norm{\bm{D\sigma}}_1 \leq \alpha}} \sum_{n=1}^N \phi_{\gamma, w_n}(x_n, \sigma_n) \nonumber \\
     & = \min_{\substack{\bm\sigma\in\R^{N}                                              \\ \norm{\bm{D\sigma}}_1 \leq \alpha}} \min_{\bm\omega \in \mathbb R_{\geq 0}^N} \left\{ \sum_{n=1}^N \omega_n \phi(x_n, \sigma_n) + \frac{\gamma}{2} \norm{\bm\omega - \bm w}_2^2 \right\}.
    \label{eq:ada-lop-l2l1}
\end{align}
This formulation performs triple optimization: signal estimation ($\bm x$), partition discovery ($\bm \sigma$), and weighting ($\bm \omega$). Note that although $\bm w$ appears as a fixed parameter vector in this definition, it is iteratively updated in the proposed algorithm to adaptively control the regularization strength (see \zcref{sec:optimization}).

\subsubsection{Asymptotic Behavior}
The parameter $\alpha \geq 0$ controls the granularity of the block partitions such as LOP-\ltwolone.
\begin{theorem}[Asymptotic Behavior of AdaLOP-\ltwolone]
    (i) As $\alpha \to \infty$, $\Psi_{\alpha, \gamma, \bm w}(\bm x)$ converges to the vector EMCP:
    \begin{equation}
        \Omega_{\gamma, \bm w}(\bm x) = \min_{\bm \omega \in \mathbb R_{\geq 0}^N} \left\{\norm{\bm x}_{\bm \omega,1} + \frac{\gamma}{2} \norm{\bm \omega - \bm w}_2^2\right\}.
    \end{equation}
    (ii) As $\alpha \to 0$, it converges to a globally adaptive weighted $\ell_2$ penalty as shown in \zcref{fig:adalop_alpha_0}:
    \begin{equation}
        \Psi_{0, \gamma, \bm w}(\bm x) = \min_{\bm\omega \in \mathbb R_{\geq 0}^N} \left\{ \norm{\bm\omega}_1^{\frac12}\norm{\bm x}_{\bm \omega, 2} + \frac{\gamma}{2} \norm{\bm\omega - \bm w}_2^2 \right\}.
        \label{eq:ada-l2}
    \end{equation}
    where $\norm{\bm x}_{\bm \omega, 2} = \sqrt{\sum_{n=1}^N \omega_n \abs{x_n}^2}$ is the weighted $\ell_2$ penalty.
    \zlabel{thm:ada-lop-l2l1}
\end{theorem}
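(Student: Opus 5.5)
The plan is to work throughout with the joint variational form \eqref{eq:ada-lop-l2l1}. It lets us exchange the order of the minimizations over $\bm\sigma$ and $\bm\omega$, and it reduces both limits to elementary one-dimensional computations. A preliminary observation is that $\Psi_{\alpha,\gamma,\bm w}(\bm x)$ is nonincreasing in $\alpha$, because the feasible set $\{\bm\sigma : \norm{\bm{D\sigma}}_1\le\alpha\}$ grows with $\alpha$. Hence both limits exist in $\R_{\ge0}$. It also satisfies $0\le \Psi_{\alpha,\gamma,\bm w}(\bm x)\le \frac{\gamma}{2}\norm{\bm w}_2^2$, which follows by taking $\bm\omega=\bm 0$ and using the convention $0\cdot\infty=0$.

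\textbf{Part (i).} First, a lower bound. For every $\alpha$, dropping the TV constraint and applying \zcref{lem:var-ada-lop-l2l1} componentwise gives
\[
\Psi_{\alpha,\gamma,\bm w}(\bm x)\ \ge\ \sum_{n}\min_{\tau}\phi_{\gamma,w_n}(x_n,\tau)=\sum_n\Omega_{\gamma,w_n}(x_n)=\Omega_{\gamma,\bm w}(\bm x).
\]
Second, a matching upper bound. Choose $\sigma_n=\abs{x_n}$, which is the minimizer identified in \zcref{lem:var-ada-lop-l2l1}; this is valid also when $x_n=0$, since $\phi(0,0)=0$. This $\bm\sigma$ is feasible as soon as $\alpha\ge\norm{\bm D\abs{\bm x}}_1<\infty$. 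Therefore $\Psi_{\alpha,\gamma,\bm w}(\bm x)=\Omega_{\gamma,\bm w}(\bm x)$ for all sufficiently large $\alpha$, and the limit is immediate. Finally, the identification with the vector EMCP is exactly \eqref{eq:vector-emcp}. Equivalently, substituting $\phi(x_n,\abs{x_n})=\abs{x_n}$ into the inner $\bm\omega$-problem gives $\norm{\bm x}_{\bm\omega,1}+\frac\gamma2\norm{\bm\omega-\bm w}_2^2$.

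\textbf{Part (ii), value at $\alpha=0$.} Since $\bm D$ is the first-order difference matrix, the constraint $\norm{\bm{D\sigma}}_1\le 0$ forces $\bm\sigma=t\bm 1$. We also need $t\ge0$, since otherwise $\phi=\infty$ for every $n$; then $\bm\omega=\bm 0$ is optimal with value $\frac\gamma2\norm{\bm w}_2^2$, which is never better than what follows. Exchanging the minimizations, we get
\[
\Psi_{0,\gamma,\bm w}(\bm x)=\min_{\bm\omega\ge\bm 0}\Big\{\min_{t\ge0}\sum_n\omega_n\phi(x_n,t)+\tfrac\gamma2\norm{\bm\omega-\bm w}_2^2\Big\}.
\]
For $t>0$, the inner objective is $\frac{1}{2t}\sum_n\omega_nx_n^2+\frac t2\sum_n\omega_n$. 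By AM--GM its infimum is $\sqrt{\sum_n\omega_nx_n^2}\sqrt{\sum_n\omega_n}=\norm{\bm\omega}_1^{1/2}\norm{\bm x}_{\bm\omega,2}$, attained at $t^2=\sum_n\omega_nx_n^2/\sum_n\omega_n$. Two degenerate cases need checking. If $\sum_n\omega_nx_n^2=0$, take $t=0$: every $n$ with $\omega_n>0$ has $x_n=0$, so $\phi(x_n,0)=0$ there, and $0\cdot\infty=0$ handles the rest. If $\bm\omega=\bm 0$, both sides vanish trivially. This yields \eqref{eq:ada-l2}.

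\textbf{Part (ii), continuity as $\alpha\to0$ (expected main obstacle).} Monotonicity gives $\lim_{\alpha\to0}\Psi_\alpha\le\Psi_0$. For the reverse inequality, take $\alpha_k\downarrow0$ and minimizers $(\bm\sigma^k,\bm\omega^k)$, which exist by the als property of \zcref{def:general-lop-type-penalty}. The $\bm\omega^k$ are bounded, because the objective is at most $\frac\gamma2\norm{\bm w}_2^2$ and so the quadratic term controls $\bm\omega^k$. For $\bm\sigma^k$ there are two cases. If $\bm\sigma^k$ stays bounded, extract a convergent subsequence. The limit has $\bm{D\sigma}=\bm 0$, and lower semicontinuity of $(\bm\sigma,\bm\omega)\mapsto\sum_n\omega_n\phi(x_n,\sigma_n)$ under the $0\cdot\infty=0$ rule gives $\liminf\Psi_{\alpha_k}\ge\Psi_0$. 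If $\bm\sigma^k$ is unbounded, then all entries diverge together, since $\norm{\bm{D\sigma}^k}_1\to0$. Because $\phi(x,\tau)\ge\tau/2$, any index with $\omega^k_n$ bounded away from $0$ would make the objective blow up. Hence $\bm\omega^k\to\bm 0$ along that subsequence, the limit value is at least $\frac\gamma2\norm{\bm w}_2^2\ge\Psi_0$, and the inequality holds in this case as well. I expect this lsc and compactness bookkeeping, in particular the interplay between $\omega_n\to0$ and $\sigma_n\to\infty$, to be the delicate step.
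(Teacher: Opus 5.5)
Your proof is correct and follows the paper's route. In (i) you plug in $\sigma_n=|x_n|$ once the TV constraint is inactive, and in (ii) you force $\bm\sigma=\tau\bm 1$ and minimize over $\tau$ in closed form. Beyond that, you supply rigor the paper omits: the lemma-based lower bound with exact equality for $\alpha\ge\|\bm D|\bm x|\|_1$, the degenerate case $\sum_n\omega_n x_n^2=0$, and the monotonicity/compactness/lsc argument for $\lim_{\alpha\to0}\Psi_{\alpha,\gamma,\bm w}(\bm x)=\Psi_{0,\gamma,\bm w}(\bm x)$, whereas the paper only evaluates the penalty at $\alpha=0$.
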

\begin{proof}
    See \zcref{app:ada-lop-l2l1} analogous to~\cite[App. C]{9729560}.
\end{proof}
This theorem confirms that the AdaLOP-\ltwolone penalty is a rigorous generalization of the MCP to the LOP-type penalty.

\begin{remark}[Parameter Selection for $\gamma$]
    The parameter $\gamma$ not only controls the penalty curvature but also acts as the inverse step size for the weight update (see \zcref{eq:omega-update}).
    Our experiments show that a smaller $\gamma$ (larger step size) is preferred when the sparsity parameter $w$ is large. This accelerates the weight decay on identified nonzero blocks, enhancing accurate recovery.
\end{remark}
\section{Optimization Algorithm}
\zlabel{sec:optimization}

\subsection{Implementation of Proposed Methods}
We solve the optimization problems for the LogLOP-\ltwolone and AdaLOP-\ltwolone penalities using the ADMM~\cite{boydDistributedOptimizationStatistical2010}. This framework effectively decouples the nonconvex regularizers from the data fidelity term, enabling efficient iterative updates.

For LogLOP-\ltwolone, the problem is formulated as:
\begin{equation}
    \mathop{\mathrm{minimize}}_{\bm x \in \R^N} f(\bm{Ax}) + \lambda\Psi_{\alpha,\epsilon}(\bm{Lx}).
    \label{eq:min-relop-l2l1}
\end{equation}
Similarly, for AdaLOP-\ltwolone:
\begin{equation}
    \mathop{\mathrm{minimize}}_{\bm x \in \R^N} f(\bm{Ax}) + \Psi_{\alpha, \gamma, \bm w}(\bm{Lx}).
    \label{eq:min-adalop-l2l1}
\end{equation}
Note that in AdaLOP-\ltwolone, following the approach in~\cite{pokalaIterativelyReweightedMinimaxConcave2022}, the weight vector $\bm w$ is also optimized iteratively to actively mitigate the estimation bias.
In the following, we assume that the function $f$ is proper, lsc, coercive and convex for simplicity.
\zcref{alg:relopl2l1,alg:adalopl2l1} present the detailed algorithms. We describe the specific update rules for each variable below. \zcref{app:relopl2l1-alg,app:adalopl2l1-alg} provide the full derivations.
Here, $\mu_1, \dots, \mu_4$ denote the penalty parameters for the augmented Lagrangian constraints, and $\rho \geq 1$ represents their update rate.
For LogLOP-\ltwolone, we suggest setting $\mu_4 = \lambda/54 + \delta$ with a small constant $\delta > 0$ to guarantee the strict convexity of the $\xi$-update (see \zcref{app:relopl2l1-alg}).

\textbf{1) $\bm x$-update (Linear System):}
Both methods share the same $\bm x$-update, minimizing quadratic penalty terms. Minimizing with respect to $\bm x$ yields the following linear system:
\begin{equation}
    \small
    \qty(\mu_1\bm A^\top\bm A + \mu_2\bm L^\top\bm L)\bm x = \mu_1\bm A^\top\qty(\bm u + \frac{\bm r_1}{\mu_1}) + \mu_2\bm L^\top\qty(\bm v + \frac{\bm r_2}{\mu_2}).
    \label{eq:x-update}
\end{equation}
This system is positive semidefinite and can be solved efficiently using conjugate gradient (CG) or FFT-based solvers\footnote{It is possible to use FFT-based solvers if the matrices $\bm A$ and $\bm L$ are diagonalizable by FFT, e.g., circulant matrices due to periodic boundary conditions.}.

\textbf{2) $\bm u$-update (Proximal Operator of $f$):}
This step applies the proximal operator of the data fidelity term:
\begin{equation}
    \bm u \leftarrow \prox_{\mu_1^{-1}f}\qty(\bm{Ax} - \mu_1^{-1}\bm r_1).
    \label{eq:u-update}
\end{equation}
For squared error $f(\bm u) = \frac{1}{2}\norm{\bm y - \bm u}_2^2$, $\prox_{\beta f}(\bm u) = {(\beta \bm y+\bm u)}/{(\beta+1)}$. For absolute error $f(\bm u) = \norm{\bm y-\bm u}_1$, $\prox_{\beta f}(\bm u) = \left(u_j + \beta(y_j - u_j)/\max\{|y_j - u_j|, \beta\}\right)_{j=1}^{J}$.

\textbf{3) $\bm \sigma$-update (Linear System):}
This update links the auxiliary variables $\bm \eta$ and $\bm \xi$ and minimizes quadratic penalty terms:
\begin{equation}
    \qty(\mu_3\bm D^\top\bm D + \mu_4\bm I)\bm\sigma = \mu_3\bm D^\top\qty(\bm\eta + \mu_3^{-1}\bm r_3) + \mu_4\qty(\bm\xi + \mu_4^{-1}\bm r_4).
    \label{eq:sigma-update}
\end{equation}

\textbf{4) $\bm \eta$-update (Projection onto $\ell_1$-ball):}
The variable $\bm \eta$ handles the constraint on the total variation of $\bm \sigma$. The update is the projection onto the $\ell_1$-ball $B_1^{\alpha} \coloneqq \{ \bm z \mid \norm{\bm z}_1 \leq \alpha \}$:
\begin{equation}
    \bm\eta \leftarrow P_{B_1^{\alpha}}\qty(\bm{D\sigma} - \mu_3^{-1}\bm r_3).
    \label{eq:eta-update}
\end{equation}
The projection $P_{B_1^{\alpha}}(\bm z)$ is computed as $\bm z$ if $\norm{\bm z}_1 \le \alpha$, and otherwise as $(\max\{\abs{z_k} - \theta, 0\} \cdot \mathrm{sign}(z_k))_{k=1}^K$, where $\theta$ is a threshold found in expected linear time~\cite{condatFastProjectionSimplex2016a}.

\textbf{5) $\bm v$ and $\bm \xi$ Updates:}
The updates for $\bm v$ and $\bm \xi$ differ between the two methods.

\textit{LogLOP-\ltwolone:}
The $\bm v$-update is the proximal operator of the Elastic Net~\cite{chierchiaProximityOperatorRepository2016}:
\begin{equation}
    \bm v \leftarrow \prox_{\lambda_1\norm{\cdot}_1 + \lambda_2\norm{\cdot}_2^2}\qty(\bm{Lx} - \mu_2^{-1}\bm r_2),
    \label{eq:v-update-relop}
\end{equation}
where $\lambda_1 = \lambda/(\mu_2 \xi_n \epsilon)$, $\lambda_2 = \lambda/(2\mu_2 \xi_n \epsilon^2)$ and
\begin{equation}
    \prox_{\lambda_1\norm{\cdot}_1 + \lambda_2\norm{\cdot}_2^2}(\bm v) = \frac{1}{1+2\lambda_2} \sign(\bm v) \odot \max\qty{0, \abs{\bm v} - \lambda_1}.
\end{equation}
The $\bm \xi$-minimization problem is strictly convex and admits a unique solution under the sufficient condition $\mu_4 > \lambda/54$ (see \zcref{app:relopl2l1-alg}). For each component, we solve the problem via the bisection method to find the optimal $\xi_n \geq 1$:
\begin{equation}
    \small
    \xi_n \leftarrow \argmin_{\xi \geq 1} \frac\lambda{2\xi}\qty(\frac{\abs{v_n}}{\epsilon}+1)^2 + \frac\lambda2\log\xi + \frac{\mu_4}{2}\qty(\xi - \sigma_n + \frac{r_{4,n}}{\mu_4})^2.
    \label{eq:xi-update-relop}
\end{equation}
The appropriate initial points for the bisection interval are given by $\xi_n = \max\qty{1, \min\qty{a, b}}$ and $\xi_n = \max\qty{a, b}$, where $a = (\abs{v_n}/\epsilon+1)^2$ and $b = \sigma_n - r_{4,n}/\mu_4$.
The details of strict convexity and the interval are given in \zcref{app:relopl2l1-alg}.

\textit{AdaLOP-\ltwolone:}
The variables $\bm v$ and $\bm \xi$ are updated jointly via the proximal operator of the weighted variational function:
\begin{equation}
    (\bm v, \bm\xi) \leftarrow \prox_{(\mu_2^{-1}\bm \omega) \odot \phi}\qty(\bm{Lx} - \mu_2^{-1}\bm r_2, \bm\sigma - \mu_4^{-1}\bm r_4).
    \label{eq:v-xi-update-adalop}
\end{equation}
This operator admits a closed-form solution~\cite{combettesPerspectiveMaximumLikelihoodtype2020}:
\begin{align*}
     & \prox_{\omega\phi}(\bm v, \bm\xi) \\ &= \begin{cases}
        (0,0),                                                   & \mif\ 2\omega\xi + \abs{v}^2 \leq \omega^2; \\
        (0, \xi - \omega/2),                                     & \mif\ v = 0\ \mand\ 2\xi > \omega;          \\
        \qty(v - s\omega\sign(v), \xi + \frac{s^2-1}{2}\omega ), & \text{otherwise},
    \end{cases}
\end{align*}
where $s > 0$ is the unique positive root of $s^3 + (2\xi/\omega+1)s - 2\abs{v}/\omega = 0$, solvable via Cardano's formula~\cite{9729560,kurodaConvexnonconvexFrameworkEnhancing2025, bauschkeRealRootsReal2023}.
Additionally, the weights $\bm \omega$ are updated as for any $n \in [N]$:
\begin{equation}
    \omega_n \leftarrow \max\qty{0, w_n - \gamma^{-1}{\phi(v_n, \xi_n)}}.
    \label{eq:omega-update}
\end{equation}

\textbf{6) Dual Variable Updates:}
The Lagrange multipliers are updated via standard ascent:
\begin{equation}
    \left\{
    \begin{aligned}
        \bm r_1 & \leftarrow \bm r_1 + \mu_1(\bm u - \bm{Ax}),        \\
        \bm r_2 & \leftarrow \bm r_2 + \mu_2(\bm v - \bm{Lx}),        \\
        \bm r_3 & \leftarrow \bm r_3 + \mu_3(\bm\eta - \bm{D\sigma}), \\
        \bm r_4 & \leftarrow \bm r_4 + \mu_4(\bm\xi - \bm\sigma).
    \end{aligned}
    \right.
    \label{eq:dual-update}
\end{equation}

\begin{algorithm}[tb]
    \caption{ADMM algorithm for LogLOP-$\ell_{2}$/$\ell_{1}$ \zcref{eq:min-relop-l2l1}}
    \zlabel{alg:relopl2l1}
    \begin{algorithmic}[1]
        \Require $\bm x^0, \bm \sigma^0, \bm u^0, \bm v^0, \bm \eta^0, \bm \xi^0, \bm r_1^0, \bm r_2^0, \bm r_3^0, \bm r_4^0, \epsilon > 0, \lambda \geq 0, \alpha \geq 0, \mu_1 > 0, \mu_2 > 0, \mu_3 > 0, \mu_4 > 0, \rho \geq 1$
        \For{$k \to0, 1, \dots, k_{\mathrm{max}}$}
        \State Updating $\bm x^{k+1}$ via \zcref{eq:x-update}
        \State $\bm u^{k+1} \leftarrow \prox_{\mu_1^{-1}f}\qty(\bm{Ax}^{k+1} - \mu_1^{-1}\bm r_1^k)$ \zcref{eq:u-update}
        \State $\bm v^{k+1} \leftarrow \prox_{\lambda_1\norm{\cdot}_1 + \lambda_2\norm{\cdot}_2^2}\qty(\bm{Lx}^{k+1} - \mu_2^{-1}\bm r_2^k)$ \zcref{eq:v-update-relop}
        \State Updating $\bm\sigma^{k+1}$ via \zcref{eq:sigma-update}
        \State $\bm\eta^{k+1} \leftarrow P_{B_1^{\alpha}}\qty(\bm{D\sigma}^{k+1} - \mu_3^{-1}\bm r_3^k)$ \zcref{eq:eta-update}
        \State Updating $\bm \xi^{k+1}$ via \zcref{eq:xi-update-relop}
        \State Updating $(\bm r_1^{k+1}, \bm r_2^{k+1}, \bm r_3^{k+1}, \bm r_4^{k+1})$ via \zcref{eq:dual-update}
        \State $(\mu_1^{k+1}, \mu_2^{k+1}, \mu_3^{k+1}, \mu_4^{k+1}) \leftarrow \rho \cdot (\mu_1^k, \mu_2^k, \mu_3^k, \mu_4^k)$
        \EndFor
        \Ensure $\bm x^{k_{\mathrm{max}}+1} \in \R^N, \bm \sigma^{k_{\mathrm{max}}+1} \in \R^K$
    \end{algorithmic}
\end{algorithm}

\begin{algorithm}[tb]
    \caption{ADMM algorithm for AdaLOP-$\ell_{2}$/$\ell_{1}$ \zcref{eq:min-adalop-l2l1}}
    \zlabel{alg:adalopl2l1}
    \begin{algorithmic}[1]
        \Require $\bm x^0, \bm \sigma^0, \bm u^0, \bm v^0, \bm \eta^0, \bm \xi^0, \bm \omega^0, \bm w^0, \bm r_1^0, \bm r_2^0, \bm r_3^0, \bm r_4^0, \gamma > 1, \alpha \geq 0, \mu_1 > 0, \mu_2 > 0, \mu_3 > 0, \mu_4 = \mu_2, \rho \geq 1$
        \For{$k \to0, 1, \dots, k_{\mathrm{max}}$}
        \State $\omega_n^{k+1} \leftarrow \max\qty{0, w_n^k - \gamma^{-1}{\phi(v_n^k, \xi_n^k)}}$ \zcref{eq:omega-update}
        \State $\bm w^{k+1} \leftarrow \bm \omega^{k+1}$
        \State Updating $\bm x^{k+1}$ via \zcref{eq:x-update}
        \State $\bm u^{k+1} \leftarrow \prox_{\mu_1^{-1}f}\qty(\bm{Ax}^{k+1} - \mu_1^{-1}\bm r_1^k)$ \zcref{eq:u-update}
        \State Updating $\bm\sigma^{k+1}$ via \zcref{eq:sigma-update}
        \State $\bm\eta^{k+1} \leftarrow P_{B_1^{\alpha}}\qty(\bm{D\sigma}^{k+1} - \mu_3^{-1}\bm r_3^k)$ \zcref{eq:eta-update}
        \State Updating $(\bm v^{k+1}, \bm\xi^{k+1})$ via \zcref{eq:v-xi-update-adalop}
        \State Updating $(\bm r_1^{k+1}, \bm r_2^{k+1}, \bm r_3^{k+1}, \bm r_4^{k+1})$ via \zcref{eq:dual-update}
        \State $(\mu_1^{k+1}, \mu_2^{k+1}, \mu_3^{k+1}, \mu_4^{k+1}) \leftarrow \rho \cdot (\mu_1^k, \mu_2^k, \mu_3^k, \mu_4^k)$
        \EndFor
        \Ensure $\bm x^{k_{\mathrm{max}}+1} \in \R^N, \bm \sigma^{k_{\mathrm{max}}+1} \in \R^K$
    \end{algorithmic}
\end{algorithm}

\begin{remark}[Convergence of \zcref{alg:relopl2l1, alg:adalopl2l1}]
    Because our algorithms employ alternating minimization, the augmented Lagrangian value (shown \zcref{eq:relopl2l1-augmented-lagrange,eq:adalopl2l1-augmented-lagrange} in \zcref{app:relopl2l1-alg, app:adalopl2l1-alg}) decreases monotonically. It is also bounded below by zero and hence the value converges to a nonnegative value.
    However, proving sequence convergence is challenging. Although our objective function satisfies the Kurdyka-\L ojasiewicz (KL) property (see \zcref{app:kl-phi,app:kl-phi-epsilon,app:kl-psi,app:kl-phi-gamma-w}), existing analyses~\cite{wangGlobalConvergenceADMM2019, botProximalAlternatingDirection2020} relying on this property assume surjective linear constraints as mentioned in a recent survey~\cite{hanSurveyRecentDevelopments2022}. In our formulation, the linear constraints are given by
    \begin{equation}
        \begin{pmatrix} -\bm I & \bm O & \bm O & \bm O \\ \bm O & -\bm I & \bm O & \bm O \\ \bm O & \bm O & -\bm I & \bm O \\ \bm O & \bm O & \bm O & -\bm I \end{pmatrix}
        \begin{pmatrix} \bm u \\ \bm v \\ \bm \eta \\ \bm \xi \end{pmatrix}
        +
        \underbrace{\begin{pmatrix} \bm A & \bm O \\ \bm L & \bm O \\ \bm O & \bm D \\ \bm O & \bm I \end{pmatrix}}_{\eqqcolon\bm C}
        \begin{pmatrix} \bm x \\ \bm \sigma \end{pmatrix}
        = \bm 0,
    \end{equation}
    where $\bm C$ is generally not surjective, violating the assumption.
    Alternative penalty methods~\cite{attouchProximalAlternatingMinimization2010} theoretically avoid this issue but lead to ill-conditioned subproblems.
    Despite these theoretical gaps, \zcref{alg:relopl2l1, alg:adalopl2l1} exhibit stable empirical convergence, supported by the convexity of the subproblems.
    \zlabel{rem:algs-convergence}
\end{remark}

\subsection{Computational Complexity Analysis}
We analyze the computational complexity of the proposed algorithms, LogLOP-\ltwolone (\zcref{alg:relopl2l1}) and AdaLOP-\ltwolone (\zcref{alg:adalopl2l1}), focusing on their per-iteration costs. The complexity is influenced by the dimensions of the target signal $\bm x$, the observation $\bm y$, and the transformed signal $\bm{Lx}$, as well as the costs associated with applying the operators $\bm A$ and $\bm L$.

Let $N$ be the target signal dimension, $J$ be the observation dimension, $K$ be the transformed signal dimension, $C_A$ and $C_L$ be the costs of applying operators $\bm A, \bm A^\top$ and $\bm L, \bm L^\top$ respectively, $N_{\mathrm{cg}}$ be the number of conjugate gradient iterations (determined by the condition number and tolerance~\cite{shewchukIntroductionConjugateGradient1994}), and $N_{\mathrm{bisect}}$ be the number of bisection iterations (determined by the initial interval and tolerance).

For LogLOP-\ltwolone, the per-iteration complexity is:
\begin{equation}
    \mathcal{O}(N_{\mathrm{cg}}(C_A + C_L) + N_{\mathrm{bisect}} \cdot K)
\end{equation}

For AdaLOP-\ltwolone, the per-iteration complexity is:
\begin{equation}
    \mathcal{O}(N_{\mathrm{cg}}(C_A + C_L) + K)
\end{equation}

When $\bm A$ and $\bm L$ are sparse operators such as difference operators, $C_A, C_L \approx \mathcal{O}(N)$.
If they are diagonalizable by the Fourier transform, the linear system can be solved directly in $\mathcal{O}(N \log N)$ operations, instead of using conjugate gradient.
\section{Numerical Experiments}
\zlabel{sec:experiments}
We evaluate the proposed LogLOP-\ltwolone and AdaLOP-\ltwolone penalties through numerical experiments. We demonstrate their effectiveness in bias mitigation and block partition recovery using both synthetic data for controlled analysis and practical applications including Angular Power Spectrum (APS) estimation and denoising of nanopore currents.

\subsection{Synthetic Data Example in Compressive Sensing}

\begin{figure}[tb]
    \centering
    \includegraphics[width=0.9\linewidth]{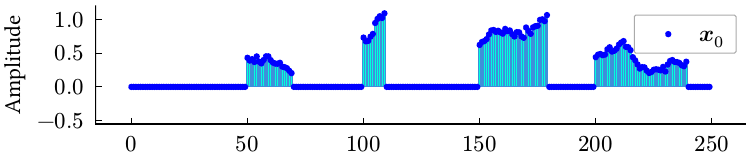}
    \vspace{0.5em}
    \includegraphics[width=0.9\linewidth]{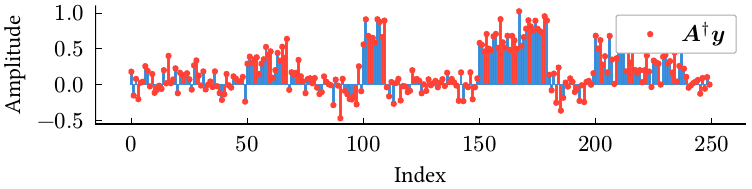}
    \caption{Experimental setup for compressive sensing. (Top) True block-sparse signal $\bm x_0 \in \R^{250}$ with four nonzero blocks. (Bottom) Initial pseudo-inverse recovery $\bm A^\dagger \bm y$ (SNR: 7.6 dB).}
    \zlabel{fig:cs_setup_signal}
\end{figure}

We first evaluate the proposed methods on a synthetic compressive sensing task, focusing on the bias mitigation and the recovery of the true support.

\subsubsection{Experimental Setup}
We consider the linear model $\bm y = \bm A \bm x + \bm \epsilon$, recovering a block-sparse signal $\bm x \in \mathbb{R}^N$ from noisy measurements $\bm y \in \mathbb{R}^J$.
We set $N=250$ and $J=200$. The sensing matrix $\bm A$ has i.i.d. entries from $\mathcal{N}(0,1)$. The ground truth $\bm x_0$ (\zcref{fig:cs_setup_signal}) contains four nonzero blocks with intra-block variations. Gaussian noise $\bm \epsilon \sim \mathcal{N}(0, \sigma^2 \bm I)$ is added to achieve an input SNR of 40 dB, where $\text{SNR} \coloneqq 20 \log_{10} (\norm{\bm x_0}_2 / \norm{\bm x - \bm {x}_0}_2)$.

\subsubsection{Performance Evaluation}
We compare LogLOP-\ltwolone and AdaLOP-\ltwolone against five baselines: $\ell_1$ norm, convex LOP-\ltwolone~\cite{9729560}, nonconvex GME-LOP-\ltwolone~\cite{kurodaConvexnonconvexFrameworkEnhancing2025}, PC-SBL~\cite{fangPatterncoupledSparseBayesian2015}, and TV-SBL~\cite{santBlockSparseSignalRecovery2022}.
For both PC-SBL and TV-SBL, we used the $\lambda$ estimated by the PC-SBL solver.
All methods solve the regularized problem $\min_{\bm{x}} \frac{1}{2}\norm{\bm{y} - \bm{Ax}}_2^2 + \mathcal{R}(\bm{x})$, where $\mathcal{R}(\bm{x}) = \lambda \Psi(\bm{x})$ for $\ell_1$, LOP, GME-LOP, and LogLOP-\ltwolone, and $\mathcal{R}(\bm{x}) = \Psi_{\alpha, \gamma, \bm w}(\bm{x})$ for AdaLOP-\ltwolone with initial weights $\bm w = \lambda \bm 1$.
For LOP-based methods, the block partition parameter is fixed at $\alpha=40.0$ which is nearly optimally chosen for LOP-\ltwolone.
For all methods, we used random values drawn from an i.i.d. standard normal distribution as the initial points to mitigate the initialization bias.

\textbf{Results:}
\begin{figure}[tb]
    \centering
    {\includegraphics[width=0.9\linewidth]{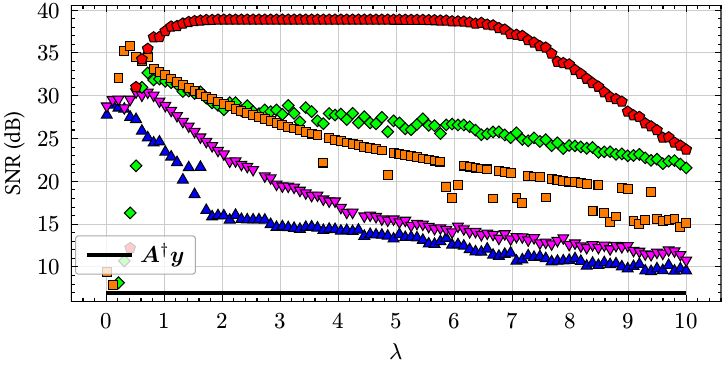}}
    \\
    {\includegraphics[width=0.9\linewidth]{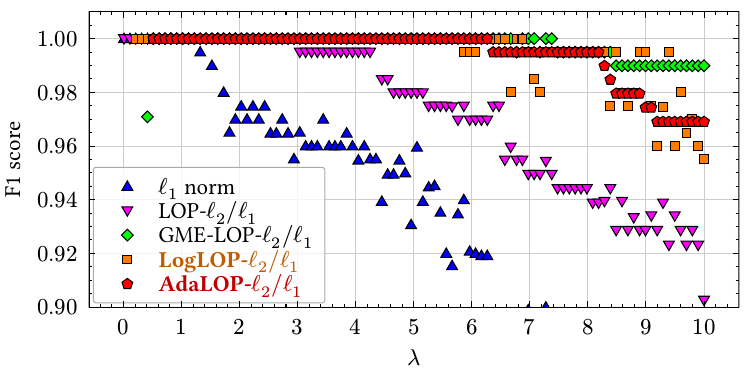}}
    \caption{Quantitative evaluation results. (Top) SNR vs. $\lambda$. (Bottom) F1 score vs. $\lambda$. AdaLOP-\ltwolone achieves the highest SNR, maintaining robustness across a wide $\lambda$ range. The proposed methods maintain near-perfect support recovery (F1 $\approx$ 1.0) significantly better than convex baselines.}
    \zlabel{fig:quantitative_results}
\end{figure}
\begin{figure}[tb]
    \centering
    \includegraphics[width=0.9\linewidth]{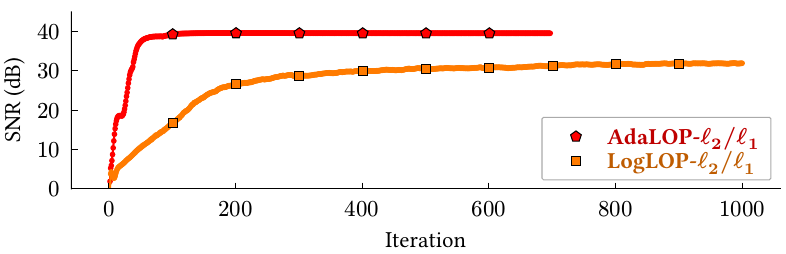}
    \caption{Convergence of SNR with fixed $\lambda=1.0$, $\alpha=40.0$.}
    \zlabel{fig:compare_convergence}
\end{figure}
\zcref{fig:quantitative_results} shows the quantitative results of a single trial between LOP-based methods. AdaLOP-\ltwolone consistently achieves the highest SNR and maintains near-perfect F1 scores for support recovery across a wide range of $\lambda$. This highlights the robustness of the proposed nonconvex formulations compared to the rapid degradation of convex methods. Furthermore, \zcref{fig:compare_convergence} confirms that the proposed methods stably converge to superior solutions from random initial points (avg. over 100 independent trials, $\lambda=1.0$).

Qualitatively, \zcref{fig:cs_qualitative_comparison_lambda1} demonstrates the bias mitigation capability. Although convex methods suffer from amplitude shrinkage, AdaLOP-\ltwolone recovers the signal amplitudes with high accuracy (38.91 dB), closely matching the ground truth.

\begin{figure}[tb]
    \centering
    \includegraphics[width=0.9\linewidth]{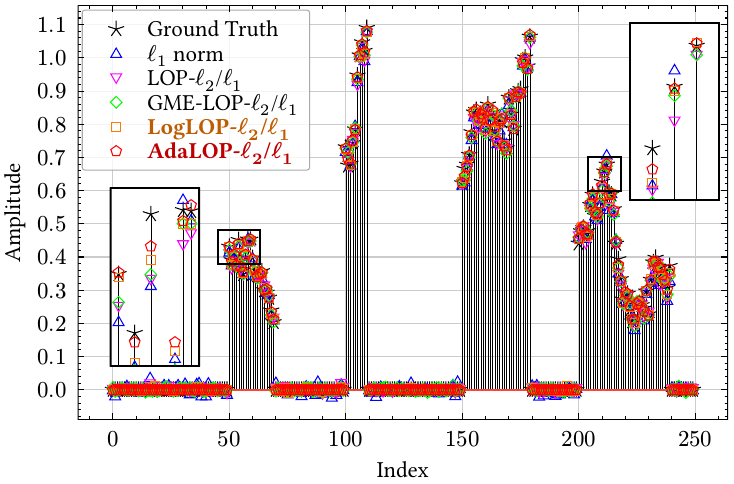}
    \caption{
        Qualitative comparison of recovered signals in a single trial.
        Convex methods ($\ell_1$, LOP-\ltwolone) exhibit amplitude shrinkage.
        In contrast, AdaLOP-\ltwolone achieves highly accurate recovery, effectively mitigating the bias.
    }
    \zlabel{fig:cs_qualitative_comparison_lambda1}
\end{figure}

\zcref{fig:robustness_analysis} further analyzes robustness, where the results are averaged over 500 independent trials. AdaLOP-\ltwolone consistently outperforms baselines across varying observation dimensions $J$ and noise levels $\sigma$, confirming its reliability in diverse conditions. For this evaluation, the parameter $\lambda$ was tuned to the best value for each method.

\begin{figure}[tb]
    \centering
    \includegraphics[width=0.9\linewidth]{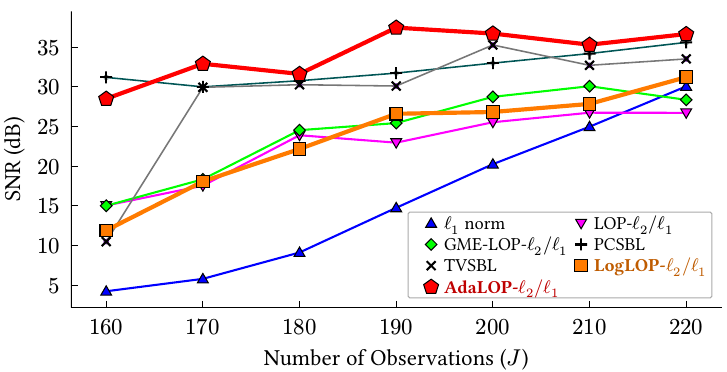}
    \vspace{1em}
    \includegraphics[width=0.9\linewidth]{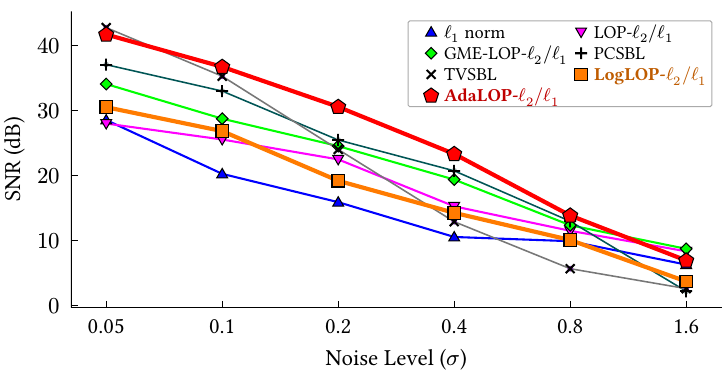}
    \caption{Robustness analysis (averaged over 500 independent trials). (Top) SNR versus number of observations $J$ with fixed $\sigma = 0.1$. (Bottom) SNR versus noise level $\sigma$ with fixed $J = 200$. AdaLOP-\ltwolone consistently outperforms baselines across varying conditions.}
    \zlabel{fig:robustness_analysis}
\end{figure}

\zcref{fig:sensitivity_plots} examines the sensitivity of AdaLOP-\ltwolone with respect to $\gamma$ (averaged over 20 trials). We observe that the optimal $\gamma$ decreases as $\lambda$ increases. Because $\gamma^{-1}$ acts as the learning rate for updating $\bm w$ in \zcref{eq:omega-update}, this implies that stronger regularization requires a larger step size for effective recovery.

\begin{figure}[tb]
    \centering
    \includegraphics[width=0.9\linewidth]{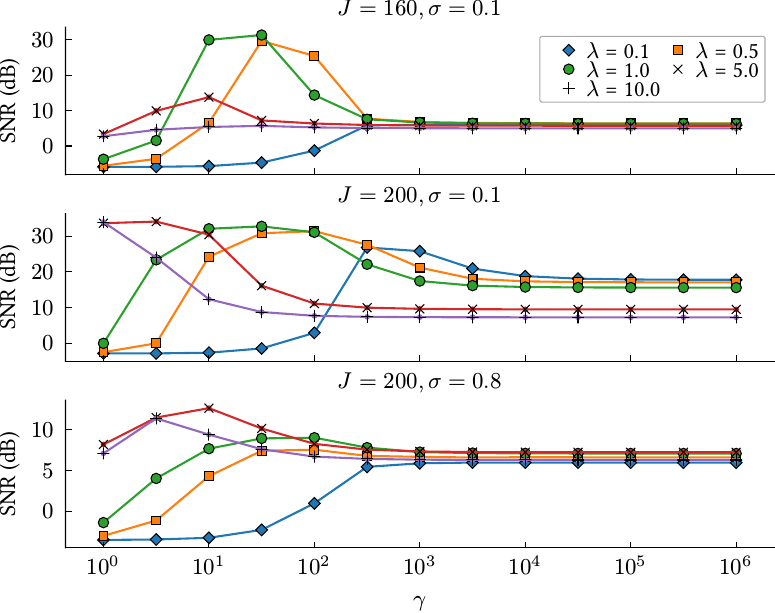}
    \caption{Parameter sensitivity of AdaLOP-\ltwolone with respect to $\gamma > 1$. As $\lambda$ increases, a smaller $\gamma$ is preferred to achieve higher SNR.}
    \zlabel{fig:sensitivity_plots}
\end{figure}

\subsection{Application to Angular Power Spectrum Estimation}
\zlabel{subsec:aps_estimation}

We apply the proposed methods to the problem of Angular Power Spectrum (APS) estimation in MIMO communication systems. This problem is often ill-posed due to the limited number of antennas compared to the angular resolution. Such a limitation is critical in practical applications~\cite{kurodaTheoreticalValidationLatent2025}, necessitating effective regularization.

\subsubsection{Problem Formulation}
Consider a Uniform Linear Array (ULA) with $M$ directive antennas following the 3GPP document~\cite{3gpp_tr38901_2017} and~\cite[Example 1]{kurodaTheoreticalValidationLatent2025}. The received signal covariance matrix $\bm R \in \mathbb{C}^{M \times M}$ is related to the APS $\bm x \in \mathbb{R}_{\geq 0}^N$ defined on an angular grid $\{\theta_n\}_{n=1}^N$ by: $\bm R = \sum_{n=1}^N x_n \bm a(\theta_n) \bm a(\theta_n)^\mathsf{H}$, where $\bm a(\theta)$ is the array response vector and $\mathbb{C}$ denotes the set of complex numbers. Given a sample covariance matrix $\hat{\bm R}$ computed from $T$ snapshots, the goal is to estimate $\bm x$.
We formulate this as a regularized optimization problem. Following~\cite{kurodaTheoreticalValidationLatent2025}, we incorporate a data-driven prior term based on past observations. The objective function is:
\begin{equation}
    \min_{\bm x \in \mathbb{R}_{\geq 0}^N} \frac{1}{2} \norm{\mathcal{A}(\bm x) - \hat{\bm r}}_2^2 + \frac{\mu}{2} \norm{\bm x - \bar{\bm x}}_{\bm P}^2 + \mathcal{R}(\bm x),
    \label{eq:aps_objective}
\end{equation}
where $\mathcal{A}(\bm x)$ is the linear operator from the APS to the vectorized covariance, $\hat{\bm r}$ is the observation vector extracted from $\hat{\bm R}$, and $\frac{\mu}{2} \norm{\bm x - \bar{\bm x}}_{\bm P}^2$ is the data-driven term with mean $\bar{\bm x}$ and precision matrix $\bm P$ derived from a dataset. Here, $\norm{\bm z}_{\bm P}^2 \coloneqq \bm z^\top \bm P \bm z$ denotes the squared Mahalanobis distance. The regularization term is defined as $\mathcal{R}(\bm{x}) = \lambda \Psi(\bm{x})$ for LOP, GME-LOP, and LogLOP-\ltwolone, and $\mathcal{R}(\bm{x}) = \Psi_{\alpha, \gamma, \bm w}(\bm{x})$ for AdaLOP-\ltwolone with initial weights $\bm w = \lambda \bm 1$.

\subsubsection{Experimental Setup}
The experimental setup follows~\cite{kurodaTheoreticalValidationLatent2025}. We simulate a scenario with $N=100$ angular grid points uniformly spaced in $[-\pi/2, \pi/2]$. The number of antennas $M$ varies from 4 to 32. The true APS consists of $Q \in \{1, 2\}$ Gaussian-shaped blocks with random positions and amplitudes. The sample covariance $\hat{\bm R}$ is estimated from $T=1000$ snapshots with an input SNR of 30 dB.
A dataset of $L=1000$ past APS samples is generated to compute the prior statistics $\bar{\bm x}$ and $\bm P$.

We compare the estimation accuracy of the proposed LogLOP and AdaLOP-\ltwolone against several baselines.
All block-sparse regularization approaches, including the SBL-based methods (PC-SBL~\cite{fangPatterncoupledSparseBayesian2015} and TV-SBL~\cite{santBlockSparseSignalRecovery2022}), incorporate the data-driven prior term $\frac{\mu}{2} \norm{\bm x - \bar{\bm x}}_{\bm P}^2$.
The competing methods include: the Hybrid method~\cite{cavalcanteHybridDataModel2020a} (which corresponds to setting $\lambda=0$ in \zcref{eq:aps_objective}), and the existing LOP and GME-LOP-\ltwolone.
We evaluate the Normalized Mean Squared Error (NMSE) of the estimated APS averaged over 100 independent trials. The hyperparameters are optimized via grid search for each method: $\mu \in \{10^i\}_{i=-2}^1$, $\lambda \in \{10^i\}_{i=-4}^0$, $\alpha \in \{10^i\}_{i=1}^4$, and $\gamma \in \{1,10,100\}$ on LOP, GME-LOP, LogLOP, AdaLOP-\ltwolone.
For both PC-SBL and TV-SBL, we used the $\lambda$ estimated internally by the PC-SBL solver, which is the standard parameter-free approach for SBL-based methods.

\subsubsection{Performance Evaluation}
\begin{figure}[tb]
    \centering
    \includegraphics[width=0.9\linewidth]{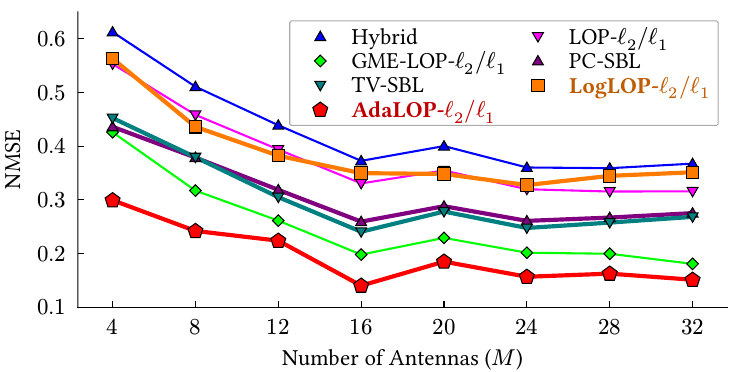}
    \caption{NMSE of APS estimation versus the number of antennas $M$ (averaged over 100 independent trials). AdaLOP-\ltwolone achieves the lowest NMSE.}
    \zlabel{fig:nmse_vs_antennas}
\end{figure}

As shown in \zcref{fig:nmse_vs_antennas}, AdaLOP-\ltwolone consistently achieves the lowest NMSE across all $M$, significantly outperforming all other methods including the state-of-the-art GME-LOP-\ltwolone and SBL-based approaches.
The performance gap is most pronounced when the number of antennas $M$ is small (e.g., $M=4, 8$), where the ill-posedness is severe. This indicates that the proposed methods effectively utilize the block sparsity prior to resolve angular ambiguity even with limited observations.

\subsection{Application to Denoising of Nanopore Currents}
\label{subsec:nanopore}

We apply the proposed methods to the denoising of nanopore currents, which is a critical step in nanopore-based DNA/RNA sequencing.
Nanopore currents are characterized by piecewise constant segments corresponding to different nucleotides, but they often exhibit transient slopes due to membrane capacitance and amplifier filtering~\cite{liangNoiseNanoporeSensors2020, wenGuideSignalProcessing2021}.
Although practical nanopore sensor noise is intricate and includes $1/f$ noise, this study models the noise environment by using a Mixed Poisson-Gaussian (MPG) distribution, which captures the signal-dependent shot noise and thermal noise~\cite{liangNoiseNanoporeSensors2020}.

To address these characteristics, we employ the Shifted I-divergence (SID) as the convex data fidelity term suitable for MPG noise~\cite{chakrabartiImageRestorationSignaldependent2012}.
We formulate the optimization problem as:
\begin{equation}
    \min_{\bm{x} \in \mathbb{R}_{\geq 0}^N} \sum_{n=1}^{N} \left\{ x_n + \nu^2 - y_n \log(x_n + \nu^2) \right\} + \lambda \Psi(\bm{D}\bm{x}),
\end{equation}
where $\bm{y} \in \mathbb{R}_{>0}^N$ is the observed signal, $\nu^2$ represents the thermal noise variance, and $\bm{D}$ is the first-order difference operator.
The regularization term $\Psi(\bm{D}\bm{x})$ promotes block-sparsity in the gradient domain, effectively modeling the piecewise linear structure (slopes) of the signal without introducing staircase artifacts common in standard TV regularization.
One of the advantages of our proposed framework is its compatibility with general data fidelity terms.
The proximal operator for the Shifted I-divergence has a closed-form solution~\cite[Lemma 2.5 (ii) and Example 4.9]{chauxVariationalFormulationFramebased2007}, enabling efficient optimization via the proposed ADMM-based algorithms (\zcref{alg:relopl2l1, alg:adalopl2l1}).

\subsubsection{Experimental Setup}
We generated synthetic nanopore signals of length $N=1024$ simulating random blockade events with depths uniformly distributed in $[30, 100]$ pA and dwell times following a Gamma distribution $\Gamma(2, 20)$, on a baseline of 150 pA.
The signals were filtered using a 4th-order Bessel filter with a time constant $\tau_{\text{cap}} = 4$ to simulate sensor capacitance.
MPG noise was added with a shot noise scale $\alpha_{\text{shot}} = 20$ and thermal noise $\sigma_{\text{thermal}} = 5$.
Examples of the generated nanopore signals are shown in \zcref{fig:nanopore_samples}.
We compared the proposed LogLOP-\ltwolone and AdaLOP-\ltwolone (using SID) against LOP-\ltwolone (using both L2 and SID) and GME-LOP-\ltwolone (L2 only).
We performed 500 independent trials.

\begin{figure}[t]
    \centering
    \includegraphics[width=0.85\linewidth]{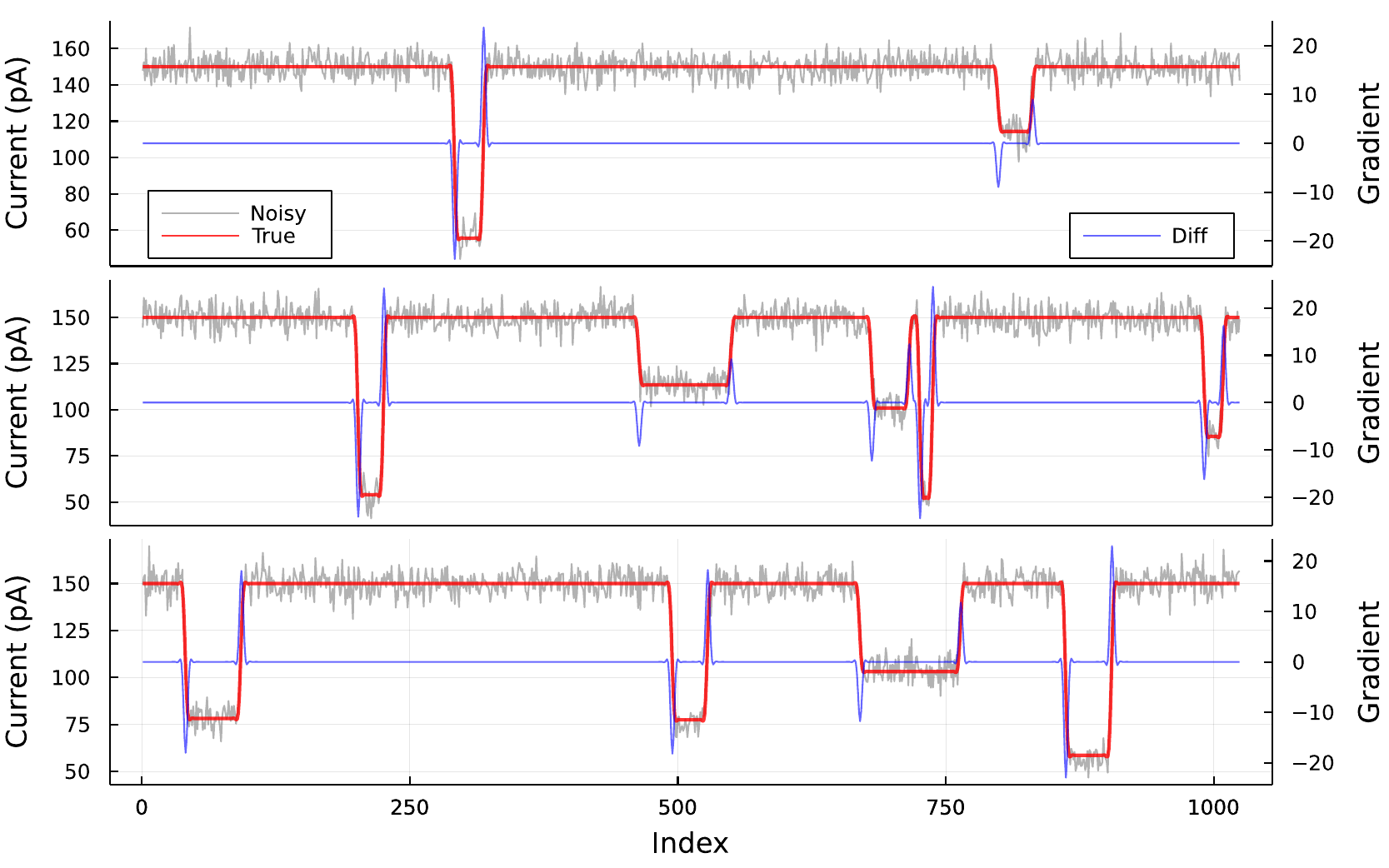}
    \caption{Examples of synthetic nanopore current signals. The signals exhibit piecewise constant structures with transient slopes (i.e., block-sparse gradients) and are corrupted by Mixed Poisson-Gaussian noise.}
    \zlabel{fig:nanopore_samples}
\end{figure}

\subsubsection{Results}
\begin{table}[t]
    \centering
    \caption{Denoising performance (SNR (dB)) on synthetic nanopore signals over 500 independent trials.}
    \zlabel{tab:nanopore_results}
    \begin{tabular}{lccc}
        \hline
        Method                 & Mean           & Median         & Std Dev \\
        \hline
        LOP-\ltwolone (L2)     & 28.41          & 28.39          & 0.40    \\
        GME-LOP-\ltwolone (L2) & 27.89          & 27.89          & 0.32    \\
        LOP-\ltwolone (SID)    & 32.77          & 32.24          & 2.55    \\
        LogLOP-\ltwolone (SID) & 32.02          & 31.76          & 1.16    \\
        AdaLOP-\ltwolone (SID) & \textbf{33.41} & \textbf{33.17} & 1.88    \\
        \hline
    \end{tabular}
\end{table}

\zcref{tab:nanopore_results} summarizes the denoising performance in terms of SNR.
The methods using the SID significantly outperform those using the standard squared error loss (L2), confirming the importance of appropriate noise modeling.
Among the SID-based methods, the proposed AdaLOP-\ltwolone achieves the highest mean SNR of 33.41 dB.
LogLOP-\ltwolone also shows robust performance with 32.02 dB.
These results demonstrate that the proposed nonconvex penalties effectively recover the underlying signal in the presence of MPG noise.

\section{Conclusion}
\zlabel{sec:conclusion}
We proposed LogLOP-\ltwolone and AdaLOP-\ltwolone, two nonconvex regularization frameworks for block-sparse recovery that do not require prior knowledge of block partitions.
These methods approximate the $\ell_0$ pseudo-norm via nonconvex variational functions to mitigate underestimation bias, while offering flexibility to handle diverse noise models beyond Gaussian assumptions.
We demonstrated their superior performance over state-of-the-art baselines in both synthetic and real-world signal processing tasks.
Although theoretical global convergence remains open due to nonconvexity, our ADMM-based algorithms exhibit stable empirical convergence.
A key limitation is the reliance on empirical hyperparameter selection. Developing automated strategies to optimally determine parameters such as the block partitions granularity remains an important direction for future research.

\printbibliography
\appendix
\appendices

\section{Existence of Minimizer for General LOP Penalty}
\zlabel{app:existence_min_adalop}

We verify the existence of a minimizer for the joint variables $(\bm x, \bm \sigma)$ in the general LOP-type penalty \eqref{eq:general-lop-framework}.
Consider the function $H(\bm x, \bm \sigma) = h(\bm x, \bm \sigma) + \iota_{\tilde{C}}(\bm x, \bm \sigma)$. Here, $h(\bm x, \bm \sigma) \coloneqq \sum_{n=1}^N \varphi(x_n, \sigma_n)$ is composed of a proper lsc function $\varphi$ that satisfies the \emph{asymptotically level stable} (als) property~\cite[Def. 3.3.2]{alfredauslenderAsymptoticConesFunctions2003}. The constraint set is $\tilde{C} \coloneqq \mathbb{R}^N \times C$, where $C = \{ \bm \sigma \in \mathbb{R}^N \mid \|\bm{D\sigma}\|_1 \le \alpha \}$ is a polyhedral set.

\textbf{Constraint Set Properties:} The constraint set $\tilde{C}$ is \emph{asymptotically linear}~\cite[Def. 2.3.2 and Prop. 2.3.1]{alfredauslenderAsymptoticConesFunctions2003} because it is a polyhedral set~\cite[Sec. 2.2.4]{boydConvexOptimization2023}.

\textbf{Variational Function Properties:} We now verify the als property for the separable sum $h$, assuming it holds for each $\varphi$.
Consider a sequence $\{ \bm z^k \}$ with $\|\bm z^k\| \to \infty$ and $\bm z^k / \|\bm z^k\| \to \bar{\bm z} \in \mathbb{R}^{2N}$.
Decompose $\bm z^k$ into $N$ pairs $\{ (z_n^k) \}_{n=1}^N$ where $z_n$ consists of $(x_n, \sigma_n)$.
For divergent components, the als property of $\varphi$ ensures $\varphi(z_n^k - \rho \bar{z}_n) \le \varphi(z_n^k)$ for any $\rho > 0$ and sufficiently large $k$.
For bounded components, $\bar{z}_n = \lim_{k\to\infty} z_n^k/\|\bm z^k\| = 0$, leading to $\varphi(z_n^k - \rho \bar{z}_n) = \varphi(z_n^k)$.

Summing these inequalities, we have
\begin{equation*}
    h(\bm z^k - \rho \bar{\bm z}) = \sum_{n=1}^N \varphi(z_n^k - \rho \bar{z}_n) \le \sum_{n=1}^N \varphi(z_n^k) = h(\bm z^k).
\end{equation*}
Thus, if $\bm z^k$ is in a level set of $h$, then $\bm z^k - \rho \bar{\bm z}$ is also in the same level set, verifying the als property of $h$.

Consequently, the sum $H = h + \iota_{\tilde{C}}$ is als~\cite[Prop. 3.3.3(c)]{alfredauslenderAsymptoticConesFunctions2003}.
Because $H$ is lower-bounded,~\cite[Cor. 3.4.2]{alfredauslenderAsymptoticConesFunctions2003} guarantees that a minimizer of $H$ exists.
This result extends to the regularized problem \zcref{eq:regularized_objective} where the data fidelity $f$ is proper, lsc, coercive, and convex. The extension is valid because $f \circ \bm A$ is weakly coercive~\cite[Example 3.2.1]{alfredauslenderAsymptoticConesFunctions2003}, and the sum $f \circ \bm A + H$ remains als by combining the definitions of weakly coercivity and als~\cite[Def. 3.2.1 and Def 3.3.2]{alfredauslenderAsymptoticConesFunctions2003} with the sum rule for asymptotic functions~\cite[Prop. 2.6.1(b)]{alfredauslenderAsymptoticConesFunctions2003}.

For specific penalties, LogLOP-\ltwolone satisfies the als property because its variational function $\phi_{\epsilon}$ (see \zcref{eq:var-rwl1}) is naturally coercive.
AdaLOP-\ltwolone also satisfies the als property.
For $\phi_{\gamma, w} = g \circ \phi$, the als property follows directly from the geometric properties of $\phi$. Since $\phi$ is convex and coercive~\cite[Lem. 1]{9729560}, its level sets are bounded and it increases monotonically along any radial direction.
This ensures $\phi(\bm z^k - \rho \bar{\bm z}) \le \phi(\bm z^k)$ for any sequence $\{ \bm z^k \}$ diverging to infinity in direction $\bar{\bm z}$ and sufficiently large $k$.
The non-decreasing nature of $g$ preserves this inequality for $\phi_{\gamma, w}$, thereby verifying the als property.

\section{Proof of \zcref{thm:var-rwl1}}
\zlabel{app:var-rwl1}
We analyze the behavior of $\Psi_{\alpha,\epsilon}(\bm x)$ in the limits of $\alpha$.
As $\alpha\to\infty$, the constraint $\norm{\bm{D\sigma}}_1 \leq \alpha$ becomes inactive. By \zcref{lem:var-rwl1}, the term $\sum_{n=1}^N\phi_{\epsilon}(x_n,\sigma_n)$ is minimized when $\sigma_n = (\abs{x_n}/\epsilon+1)^2$, implying $\lim_{\alpha\to\infty}\Psi_{\alpha,\epsilon}(\bm x) = \Omega_{\epsilon}(\bm x)$.

Conversely, as $\alpha\to0$, the constraint enforces $\norm{\bm{D\sigma}}_1 = 0$, implying $\sigma_n = \tau$ for all $n$ and some $\tau \geq 1$. Thus:
\begin{align*}
    \Psi_{0,\epsilon}(\bm x)
     & = \min_{\tau \geq 1}\sum_{n=1}^N \qty{\frac1{2\tau}\qty(\frac{\abs{x_n}}{\epsilon}+1)^2 + \frac12\log\tau - \frac12}                       \\
     & = \min_{\tau \geq 1} \left\{ \frac{1}{2\tau}\sum_{n=1}^N \qty(\frac{\abs{x_n}}{\epsilon}+1)^2 + \frac{N}{2}\log\tau - \frac{N}{2} \right\} \\
     & = \frac{N}{2} + N\log\sqrt{\frac1N{\sum_{n=1}^N \qty(\frac{\abs{x_n}   }{\epsilon}+1)^2}} - \frac{N}{2}                                    \\
     & = N\log\sqrt{\frac1N{\sum_{n=1}^N \qty(\frac{\abs{x_n}}{\epsilon}+1)^2}}.
\end{align*}
The transition from the second to the third line follows by substituting the optimal $\tau = N^{-1}\sum_{n=1}^N \qty({\abs{x_n}}/{\epsilon}+1)^2$, which can be derived similarly to \zcref{lem:var-rwl1}.

\section{Proof of \zcref{prop:monotonicity-of-f}}
\zlabel{app:monotonicity-of-f}
We show that the penalty difference $\Delta(z)$ is positive for $z > L$ and $M > L$, where $M = \abs{\mathcal B}$, $L = \abs{\mathcal B'}$, and $z \coloneqq \sum_{n \in \mathcal B'} (\abs{x_n}/\epsilon+1)^2$.
The penalty difference is:
\begin{equation}
    \Delta(z) \coloneqq M \log\left(\frac{z+M-L}{M}\right) - L \log\left(\frac{z}{L}\right).
\end{equation}
Note that $\Delta(L) = 0$.
The derivative with respect to $z$ is:
\begin{align}
    \Delta'(z) & = \frac{M}{z+M-L} - \frac{L}{z} = \frac{(M-L)(z-L)}{z(z+M-L)}.
\end{align}
For $z > L$ and $M > L$, we have $M-L > 0$, $z-L > 0$, and $z(z+M-L) > 0$. Thus, $\Delta'(z) > 0$.
Because $\Delta(L) = 0$ and $\Delta(z)$ is strictly increasing for $z > L$, it follows that $\Delta(z) > 0$.
This confirms that separating the zero block $\mathcal B''$ from the block $\mathcal B$ strictly reduces the penalty.

\section{Proof of \zcref{lem:var-ada-lop-l2l1}}
\zlabel{app:proof_adalopl2l1_upper_bounds_mcp}

We establish the inequality $\phi_{\gamma, w}(x, \sigma) \geq \Omega_{\gamma, w}(x)$ for any $w, \gamma > 0$.
The MCP function $g \coloneqq \Omega_{\gamma, w}$ is extended to the extended real line $[0, \infty]$ as
\begin{equation}
    g(z) = \begin{cases}
        w z - \frac{z^2}{2\gamma}, & \text{if } 0 \leq z \leq \gamma w; \\
        \frac{w^2 \gamma}{2},      & \text{if } z \geq \gamma w.
    \end{cases}
    \label{eq:mcp-def}
\end{equation}
Note that $g(\abs{x}) = \Omega_{\gamma, w}(x)$ for all $x \in \mathbb{R}$.
The continuity and monotonicity of $g$ on $[0, \infty]$ ensure that $\phi_{\gamma, w} = g \circ \phi$ is well-defined and satisfies the required inequality.

\textbf{Monotonicity of $g(z)$.}
For $0 \le z < \gamma w$, $g'(z) = w - z/\gamma > 0$, while for $z \ge \gamma w$, $g(z)$ remains constant at $w^2\gamma/2$, implying $g'(z) = 0$ for $z > \gamma w$.
Continuity at $z = \gamma w$ is maintained as $\lim_{z \uparrow \gamma w} g(z) = \frac{w^2\gamma}{2} = g(\gamma w)$.
Thus, $g(z)$ is nondecreasing on the extended real line $[0, \infty]$.

Since $\phi(x, \sigma) \ge \abs{x}$ and $g(z)$ is nondecreasing:
\begin{equation}
    g(\phi(x, \sigma)) \ge g(\abs{x}) \implies \phi_{\gamma, w}(x, \sigma) \ge \Omega_{\gamma, w}(x).
\end{equation}
Equality holds when $\tau = \abs{x}$~\cite[Lemma 1]{9729560}.

\section{Proof of \zcref{thm:ada-lop-l2l1}}
\zlabel{app:ada-lop-l2l1}
We analyze the limits of $\Psi_{\alpha, \gamma,\bm w}(\bm x)$.
As $\alpha\to\infty$, the TV constraint $\norm{\bm{D\sigma}}_1\leq\alpha$ is ignored. Minimizing $\sum_{n=1}^N$ $\{\omega_n\cdot$ $\phi(x_n,\sigma_n)\}$ yields $\sigma_n = \abs{x_n}$, so $\Psi_{\alpha, \gamma,\bm w}(\bm x) \to \Omega_{\gamma,\bm w}(\bm x)$.

As $\alpha\to 0$, the constraint implies $\sigma_n = \tau$ for all $n$. Thus:
\begin{align}
    \Psi_{0,\gamma,\bm w}(\bm x) & = \min_{\substack{\tau\in\R_{\geq0} \\ \bm\omega\in\R_{\geq0}}}\sum_{n=1}^N\qty{\omega_n\qty(\frac{\abs{x_n}^2}{2\tau}+\frac\tau2)} + \frac{\gamma}{2} \norm{\bm\omega - \bm w}_2^2 \\
                                 & = \min_{\substack{\tau\in\R_{\geq0} \\ \bm\omega\in\R_{\geq0}}}\frac{\norm{\bm x}_{\bm\omega,2}^2}{2\tau} + \frac{\tau}{2}\norm{\bm\omega}_1 + \frac{\gamma}{2} \norm{\bm\omega - \bm w}_2^2 .
\end{align}
Minimizing over $\tau$ yields $\tau = \norm{\bm x}_{\bm\omega,2}/ \norm{\bm\omega}_1^{1/2}$ (similar to the proof of~\cite[Lemma 1]{9729560}). Substituting this back yields \zcref{eq:ada-l2}.

\section{Derivation of \zcref{alg:relopl2l1}}
\zlabel{app:relopl2l1-alg}
We derive the ADMM updates for the problem regularized by the LogLOP-$\ell_{2}$/$\ell_{1}$ penalty:
\begin{equation}
    \min_{\bm x \in \R^N} f(\bm{Ax}) + \lambda\Psi_{\alpha,\epsilon}(\bm{Lx}).
\end{equation}
Introducing auxiliary variables $\bm u = \bm{Ax}$, $\bm v = \bm{Lx}$, $\bm\eta = \bm{D\sigma}$, and $\bm\xi = \bm\sigma$, we rewrite the problem as:
\begin{align}
     & \min_{\substack{\bm x \in \R^N, \bm\sigma \in \R^K, \bm u \in \R^J,                               \\ \bm v \in \R^K, \bm\eta \in \R^{K-1}, \bm\xi \in \R^K}} f(\bm{u}) + \lambda\sum_{n=1}^N \phi_{\epsilon}(v_n,\xi_n) + \iota_{B_1^{\alpha}}(\bm{\eta}),                                                 \\
     & \quad \text{ s.t. } \bm u = \bm{Ax}, \bm v = \bm{Lx}, \bm\eta = \bm{D\sigma}, \bm\xi = \bm\sigma.
\end{align}
Here, $\iota_C(\bm x)$ is the indicator function of a set $C$, which takes $0$ if $\bm x \in C$ and $\infty$ otherwise.
The augmented Lagrangian is:
\begin{align}
    \label{eq:relopl2l1-augmented-lagrange}
    \mathcal{L} & = f(\bm u) + \lambda\sum_{n=1}^N \phi_\epsilon(v_n, \xi_n) + \iota_{B_1^{\alpha}}(\bm\eta)            \\
                & + \bm r_1^\top(\bm u-\bm{Ax}) + \frac{\mu_1}{2}\norm{\bm u-\bm{Ax}}_2^2 + \bm r_2^\top(\bm v-\bm{Lx}) \\ &+ \frac{\mu_2}{2}\norm{\bm v-\bm{Lx}}_2^2
    + \bm r_3^\top(\bm\eta-\bm{D\sigma})+ \frac{\mu_3}{2}\norm{\bm\eta-\bm{D\sigma}}_2^2                                \\
                & + \bm r_4^\top(\bm\xi-\bm\sigma)+ \frac{\mu_4}{2}\norm{\bm\xi-\bm\sigma}_2^2,
\end{align}
where $\mu_i > 0$ are ADMM parameters and $\bm r_i$ are Lagrange multipliers. \zcref{alg:relopl2l1} monotonically decreases $\mathcal{L}$ provided $\mu_4 \geq \lambda/54$ because the ADMM update rule is derived from alternating minimization and the subproblems are convex.

1) Updates for $\bm x, \bm u, \bm v, \bm \sigma, \bm \eta$: Minimizing $\mathcal L$ with respect to each variable yields the updates derived in \zcref{eq:x-update,eq:u-update,eq:v-update-relop,eq:sigma-update,eq:eta-update}.

2) $\bm\xi$-update: The subproblem separates into element-wise minimizations as demonstrated in \zcref{eq:xi-update-relop}.
To ensure strict convexity and uniqueness of the solution, we require $\mu_4 > \lambda/54$.
This condition is derived from a lower bound of the second derivative with respect to $\xi_n$:
\begin{align}
    \frac{\partial^2 \mathcal L}{\partial \xi_n^2} & = \lambda\qty(\frac{a}{\xi_n^3} - \frac{1}{2\xi_n^2}) + \mu_4 \geq \mu_4 - \frac{\lambda}{54a^2} \geq \mu_4 - \frac{\lambda}{54}.
\end{align}
where $a = (\abs{v_n}/\epsilon+1)^2$. We solve for $\xi_n$ with the bisection method. The first derivative decides the bisection interval:
\begin{align}
    \frac{\partial \mathcal L}{\partial \xi_n} & = \frac{\lambda}{2\xi_n^2}\qty(\xi_n - a) + \mu_4(\xi_n-b),
\end{align}
where $b = \sigma_n - \mu_4^{-1}r_{4,n}$.
By substituting $\xi_n=a$ and $\xi_n=b$ into the derivative, we obtain $\frac{\partial \mathcal L}{\partial \xi_n}\mid_{\xi_n=a} = \mu_4(a-b)$ and $\frac{\partial \mathcal L}{\partial \xi_n}\mid_{\xi_n=b} = -\frac{\lambda}{2b^2}(a-b)$, which have opposite signs. Thus, the solution lies in $[\min(a,b), \max(a,b)]$. Considering the constraint $\xi_n \geq 1$ and the fact that $a \geq 1$, the optimal solution is bracketed by $\xi_n = \max\qty{1, \min\qty{a, b}}$ and $\xi_n = \max\qty{a, b}$.

3) Dual variable $(\bm r_i)_{i=1}^4$ updates are described in \zcref{eq:dual-update}.

\section{Derivation of \zcref{alg:adalopl2l1}}
\zlabel{app:adalopl2l1-alg}

We derive the ADMM updates for the problem regularized by the AdaLOP-$\ell_{2}$/$\ell_{1}$ penalty:
\begin{equation}
    \min_{\bm x \in \R^N} f(\bm{Ax}) + \Psi_{\alpha, \gamma, \bm w}(\bm{Lx}).
\end{equation}
The augmented Lagrangian $\mathcal{L}$ is:
\begin{align}
    \label{eq:adalopl2l1-augmented-lagrange}
    \mathcal{L} & = f(\bm u) + \sum_{n=1}^N \omega_n \phi(v_n, \xi_n) + \frac{\gamma}{2}\norm{\bm\omega - \bm w}_2^2 + \iota_{B_1^{\alpha}}(\bm\eta) \\
                & + \bm r_1^\top(\bm u - \bm{Ax}) + \frac{\mu_1}{2}\norm{\bm u - \bm{Ax}}_2^2 + \bm r_2^\top(\bm v - \bm{Lx})                        \\
                & + \frac{\mu_2}{2}\norm{\bm v - \bm{Lx}}_2^2
    + \bm r_3^\top(\bm\eta - \bm{D\sigma}) + \frac{\mu_3}{2}\norm{\bm\eta - \bm{D\sigma}}_2^2                                                        \\
                & + \bm r_4^\top(\bm\xi - \bm\sigma) + \frac{\mu_4}{2}\norm{\bm\xi - \bm\sigma}_2^2.
\end{align}
\zcref{alg:adalopl2l1} monotonically decreases $\mathcal{L}$.

1) Updates for $\bm x, \bm u, \bm \sigma, \bm \eta, (\bm r_i)_{i=1}^4$: These are identical to the LogLOP-\ltwolone updates as described in \zcref{eq:x-update,eq:u-update,eq:sigma-update,eq:eta-update}.

2) Joint $(\bm v, \bm\xi)$-update: Setting $\mu_4 = \mu_2$, the update is the proximal operator of $\phi$ as described in \zcref{eq:v-xi-update-adalop}.

3) $\bm \omega$-update: Minimizing over $\omega_n \geq 0$ leads to a solution as shown in \zcref{eq:omega-update}.
While the rule $0 \cdot (-\infty) = -\infty$ is theoretically required to handle $\phi(v_n, \xi_n) = \infty$, initializing with a feasible point ensures $\phi(v_n,\tau_n) < \infty$ at every iteration in practice.

4) $\bm w$-update: The minimum occurs when $\bm w \leftarrow \bm \omega$.

\section{Kurdyka-\L ojasiewicz Property of $\phi(x, \tau)$}
\zlabel{app:kl-phi}
We show that $\phi: \mathbb{R} \times \mathbb{R} \to \mathbb{R}_{\geq 0} \cup \{+\infty\}$ given in \zcref{eq:var-l2norm} satisfies the Kurdyka-\L ojasiewicz (KL) property.
A function has the KL property if it is proper, lower semicontinuous, and its graph is a semialgebraic set (or definable in an o-minimal structure). Since $\dom(\phi) \neq \emptyset$ and $\phi(x, \tau) \geq 0$, it is proper. The lower semicontinuity of $\phi$ is proved in~\cite[App. B]{9729560}.

The graph of $\phi$ is $\text{graph}(\phi) = \{(x,\tau,y) \in \mathbb{R}^3 \mid y = \phi(x,\tau) \text{ and } \phi(x,\tau) < \infty \}$.
\begin{itemize}[leftmargin=*]
    \item \textbf{Case 1: $\tau > 0$}.
          The condition $y = \frac{x^2}{2\tau} + \frac{\tau}{2}$ is equivalent to $2\tau y - x^2 - \tau^2 = 0$.
          The set $G_1 = \{ (x,\tau,y) \in \mathbb{R}^3 \mid 2\tau y - x^2 - \tau^2 = 0 \land \tau > 0 \}$ is semialgebraic as it is defined by polynomial equality and inequality.

    \item \textbf{Case 2: $x = 0$ and $\tau = 0$}.
          The set $G_2 = \{ (x,\tau,y) \in \mathbb{R}^3 \mid x=0 \land \tau=0 \land y = 0 \}$ is semialgebraic.
\end{itemize}
Since $\text{graph}(\phi) = G_1 \cup G_2$ is a finite union of semialgebraic sets, it is semialgebraic.
Being proper, lsc, and having a semialgebraic graph, $\phi(x, \tau)$ possesses the KL property.

\section{Kurdyka-\L ojasiewicz Property of $\phi_\epsilon(x, \tau)$}
\zlabel{app:kl-phi-epsilon}
\newcommand{\Ranexp}{\mathbb{R}_{\text{an, exp}}}
\newcommand{\Ralg}{\mathbb{R}_{\text{alg}}}

We verify the KL property for the function $\phi_\epsilon(x, \tau)$.

\textbf{Properness:}
The $\dom(\phi_\epsilon) = \{(x,\tau) \mid \tau \geq 1\}$ is nonempty.
For $\tau \geq 1$, $\phi_\epsilon(x, \tau) \geq \log \left(|x|/\epsilon + 1 \right) \geq 0$. Thus, $\phi_\epsilon$ is proper.

\textbf{Lower semicontinuity:}
We show that the level sets $L_a = \{(x, \tau) \in \mathbb{R}^2 \mid \phi_\epsilon(x, \tau) \le a\}$ are closed for any $a \in \mathbb{R}$.
Since $\dom(\phi_\epsilon) = \{(x, \tau) \mid \tau \ge 1\}$, points with $\tau < 1$ have $\phi_\epsilon(x, \tau) = \infty$, so they are not in $L_a$. Thus, $L_a \subseteq \{(x, \tau) \mid \tau \ge 1\}$.
On the set $\{(x, \tau) \mid \tau \ge 1\}$, $\phi_\epsilon(x, \tau)$ is continuous. Therefore, $L_a$ is the intersection of the closed set $\{(x, \tau) \mid \tau \ge 1\}$ and the preimage of the closed set $(-\infty, a]$ under a continuous function. Thus, $L_a$ is closed, which implies that $\phi_\epsilon$ is lower semicontinuous.

\textbf{Definability:}
For $\tau \geq 1$, $\phi_\epsilon(x, \tau)$ involves $|x|$, $1/\tau$, and $\log \tau$, which are definable in the log-exp structure $\Ranexp$~\cite{attouchProximalAlternatingMinimization2010}.
The graph restricted to $\tau \geq 1$ is thus definable in $\Ranexp$.
Because $\phi_\epsilon(x, \tau)$ is proper, lsc, and definable in $\Ranexp$, it satisfies the KL property.

\section{Kurdyka-\L ojasiewicz Property of $\psi(x, \tau, \omega, w)$}
\zlabel{app:kl-psi}

We establish the KL property for the joint function $\psi(x, \tau, \omega, w) \coloneqq \omega \phi(x, \tau) + \frac{\gamma}{2}(\omega - w)^2$, which appears in the minimization problem of AdaLOP-\ltwolone.
The variational function $\phi_{\gamma, w}(x, \tau)$ satisfies the KL property (see \zcref{app:kl-phi-gamma-w}).
However, \zcref{alg:adalopl2l1} operates on the augmented Lagrangian involving the joint variables $(x, \tau, \omega, w)$. Therefore, we verify the KL property for $\psi$ with respect to the joint variables.

\textbf{Properness:}
The effective domain of $\psi$ is $\dom(\psi) = \dom(\phi) \times \mathbb{R}_{\geq 0} \times \mathbb{R}_{\geq 0}$. Because $\dom(\phi)$ is nonempty, $\dom(\psi)$ is also nonempty.
Furthermore, as $\phi(x, \tau) \ge 0$, $\omega \ge 0$, and $\gamma > 0$, the function is bounded below by $0$. Thus, $\psi$ is proper.

\textbf{Lower semicontinuity (lsc):}
The function $\phi(x, \tau)$ is lsc as shown in~\cite[App. B]{9729560}.
The variable $\omega$ is nonnegative, and the mapping $(x, \tau, \omega, w) \mapsto \omega$ is continuous.
Because the product of nonnegative lsc functions is lsc whenever it is defined~\cite{bourbakiGeneralTopology1989}, the term $\omega \phi(x, \tau)$ is lsc under the rule $0 \cdot \infty = 0$.
The quadratic term $\frac{\gamma}{2}(\omega - w)^2$ is clearly continuous.
The sum of lsc functions is lsc~\cite{bourbakiGeneralTopology1989}, hence $\psi$ is lower semicontinuous.

\textbf{Definability:}
The graph of $\phi(x, \tau)$ is semialgebraic, as proven in \zcref{app:kl-phi}.
Polynomial functions such as $\omega$ and $(\omega - w)^2$, are semialgebraic.
Because the class of semialgebraic functions is closed under finite sums and products~\cite[Sec. 4.3]{attouchProximalAlternatingMinimization2010}, $\psi(x, \tau, \omega, w)$ is semialgebraic.
Consequently, $\psi(x, \tau, \omega, w)$ is definable in an o-minimal structure.

Being proper, lsc, and definable, the joint variational function $\psi(x, \tau, \omega, w)$ satisfies the KL property.


\section{Kurdyka-\L ojasiewicz Property of $\phi_{\gamma, w}(x, \tau)$}
\zlabel{app:kl-phi-gamma-w}

We verify the KL property for the composite function $\phi_{\gamma, w}(x, \tau) = g(\phi(x, \tau))$ defined in \zcref{eq:eq-var-lop-l2l1}.
Note that $g(z)$ is the nondecreasing, piecewise polynomial function \zcref{eq:mcp-def} and $\phi(x, \tau)$ is the KL function (see \zcref{app:kl-phi}).

\textbf{Properness:}
Since $g$ is defined on the extended real line $[0, \infty]$ (see \zcref{app:proof_adalopl2l1_upper_bounds_mcp}), the composition $\phi_{\gamma, w} = g \circ \phi$ is well-defined for all $(x, \tau)$.
Given that $0 \le \phi_{\gamma, w}(x, \tau) \le \frac{\gamma w^2}{2}$, it follows that $\phi_{\gamma, w}$ is proper.

\textbf{Lower semicontinuity:}
We show that the level sets $L_a = \{ (x, \tau) \mid \phi_{\gamma, w}(x, \tau) \leq a \}$ are closed for any $a \in \mathbb{R}$.
\begin{enumerate}
    \item If $a \geq \frac{\gamma w^2}{2}$, the set $L_a$ is $\mathbb{R}^2$, which is closed.
    \item If $a < \frac{\gamma w^2}{2}$, the condition $\phi_{\gamma, w}(x, \tau) \leq a$ implies $g(\phi(x, \tau)) \leq a$. In this range, $g(z)$ is strictly increasing. Thus, this is equivalent to $\phi(x, \tau) \leq g^{-1}(a)$. Since $\phi(x, \tau)$ is lower semicontinuous, its level set is closed.
\end{enumerate}
Thus, $\phi_{\gamma, w}$ is lower semicontinuous.

\textbf{Definability:}
The composition of selfalgebraic functions is selfalgebraic~\cite{attouchProximalAlternatingMinimization2010}.
Therefore, $\phi_{\gamma, w} = g \circ \phi$ is selfalgebraic and hence definable~\cite[Sect. 4.3]{attouchProximalAlternatingMinimization2010}.

Consequently, $\phi_{\gamma, w}$ satisfies the KL property.


\end{document}